\documentclass[journal]{IEEEtran} 

\usepackage{cite}
\usepackage{url}
\usepackage{hyperref}
\usepackage{amsmath,amssymb,amsfonts}
\usepackage{algorithmic}
\usepackage{graphicx}
\usepackage{textcomp}

\usepackage{enumitem}
\usepackage{multirow}
\usepackage{booktabs}
\usepackage{xcolor}
\usepackage{flushend}

\usepackage{nicefrac}
\usepackage{caption}
\usepackage{subcaption}
\usepackage{lipsum}

\usepackage[ruled,vlined,linesnumbered]{algorithm2e}
\usepackage{algorithmic,algorithm2e,float}

\SetKw{Continue}{continue}
\SetKw{Break}{break}

\usepackage{accents}

\newcommand{\vect}[1]{\boldsymbol{#1}}

\newcommand{\be}{\begin{equation}}
\newcommand{\ee}{\end{equation}}


\newcommand{\A}{\mathcal{A}}
\newcommand{\W}{\mathcal{W}}
\newcommand{\I}{\mathcal{I}}

\newcommand{\R}{\mathbb{R}}

\newcommand{\w}{\vect{w}}
\newcommand{\cvect}{\vect{c}}
\newcommand{\Cvec}{\vect{C}}
\newcommand{\vmu}{\vect{\mu}}

\newcommand{\T}{\mathcal{T}}
\newcommand{\Sset}{\mathcal{S}}

\newcommand{\KL}{\mathtt{KL}}

\mathchardef\mhyphen="2D
\newcommand{\AS}{\mathtt{AS}}
\newcommand{\baseline}{\mathtt{DC}}

\newcommand{\Unib}{\mathtt{Uni}}
\newcommand{\Uni}{\mathtt{Uni\mhyphen A}}

\newcommand{\ie}{\textit{i.e.,}~}
\newcommand{\eg}{\textit{e.g.,}~}

\newcommand{\grey}[1]{\textcolor{gray}{#1}}


\usepackage{amsthm}
\theoremstyle{definition}
\newtheorem{problem}{Problem}
\newtheorem{theorem}{Theorem}

\newtheorem{assumption}{Assumption}
\newtheorem{definition}{Definition}

\newtheorem*{remark*}{Remark}
\newtheorem{claim}{Claim}

\newtheorem{lemma}{Lemma}
\newenvironment{subproof}[1][\proofname]{%
  \begin{proof}[#1]%
}{%
  \end{proof}%
}

\begin{document}
\title{\LARGE \bf
{Statistically Distinct Plans for Multi-Objective Task Assignment}
}

\author{Nils Wilde, \textit{Member}, IEEE, and Javier Alonso-Mora, \textit{Senior Member}, IEEE
\thanks{This research is supported by the European Union's Horizon 2020 research and innovation program under Grant 101017008.}
\thanks{The authors are with the Department for Cognitive Robotics, 3ME,
Delft University of Technology, Delft, Netherlands, 
\texttt{\{n.wilde, j.alonsomora\}@tudelft.nl}}%
}

\markboth{IEEE TRANSACTIONS ON ROBOTICS}%
{{Statistically Distinct Plans for Multi-Objective Planning and Task Assignment}}
\maketitle

\begin{abstract}
We study the problem of finding statistically distinct plans for stochastic planning and task assignment problems such as online multi-robot pickup and delivery (MRPD) when facing multiple competing objectives.
In many real-world settings robot fleets do not only need to fulfil delivery requests, but also have to consider auxiliary objectives such as energy efficiency or avoiding human-centered work spaces. 
We pose MRPD as a multi-objective optimization problem where the goal is to find MRPD policies that yield different trade-offs between given objectives. 
There are two main challenges: 1) MRPD is computationally hard, which limits the number of trade-offs that can reasonably be computed, and 2) due to the random task arrivals, one needs to consider statistical variance of the objective values in addition to the average.
We present an adaptive sampling algorithm that finds a set of policies which i) are approximately optimal, ii) approximate the set of all optimal solutions, and iii) are statistically distinguishable. We prove completeness and adapt a state-of-the-art MRPD solver to the multi-objective setting for three example objectives.
In a series of simulation experiments we demonstrate the advantages of the proposed method compared to baseline approaches and show its robustness in a sensitivity analysis. The approach is general and could be adapted to other multi-objective task assignment and planning problems under uncertainty.

\end{abstract}
\begin{IEEEkeywords}
Multi-Robot Task Assignment, Pickup and Delivery, Path Planning for Multiple Mobile Robots, Multi-Objective Optimization.
\end{IEEEkeywords}


\section{Introduction}

Autonomous robots are becoming increasingly capable of solving complex tasks in challenging and dynamically changing environments. These advancements will soon enable the large scale deployment of robot fleets in a wide range of applications including transportation, on-site assistance service, autonomous mobility on demand, environmental monitoring and inspection.
For instance, the deployment of mobile robots in hospitals and care homes for assistive tasks such as material transport promises to reduce the workload of perpetually overburdened skilled personnel \cite{niechwiadowicz2008robot, abubakar2020arna}. 

{Many robot planning problems such as path planning, multi-robot task assignment (MRTA), multi-agent path finding (MAPF) or multi-robot pickup and delivery (MRPD) need to consider multiple competing objectives simultaneously.}
Usually, the primary goal is to provide the optimal quality of service (QoS), captured by measures such as the average or maximum wait times, delivery delays, system throughput, or the number of on-time deliveries, among others.
However, in practice, the deployment of autonomous robots may require the consideration of additional objectives.
For instance, service robots might need to balance between quality of service (QoS) and operation cost \cite{cap2018multi}, or consider sustainable costs \cite{niranjani2022minimization}.
Moreover, when navigating in human-centered workspaces robot fleets need to consider established social norms,  such as avoiding areas of the environment with high foot traffic \cite{wilde2020improving} or social navigation objectives \cite{biswas2022socnavbench}.

In this paper, we {study how we can compute statistically distinct system plans when optimizing for multiple objectives under uncertainty. We specifically focus on multi-objective MRPD \cite{camisa2022multi}, a special case of MRTA where a fleet of robots needs to service dynamically appearing transportation requests. However, the proposed solution technique could be extended to multi-objective MAPF or other MRTA variants when considering stochastic task arrivals or travel times.}

\begin{figure}[t]
    \centering
   \includegraphics[width=0.99\linewidth]{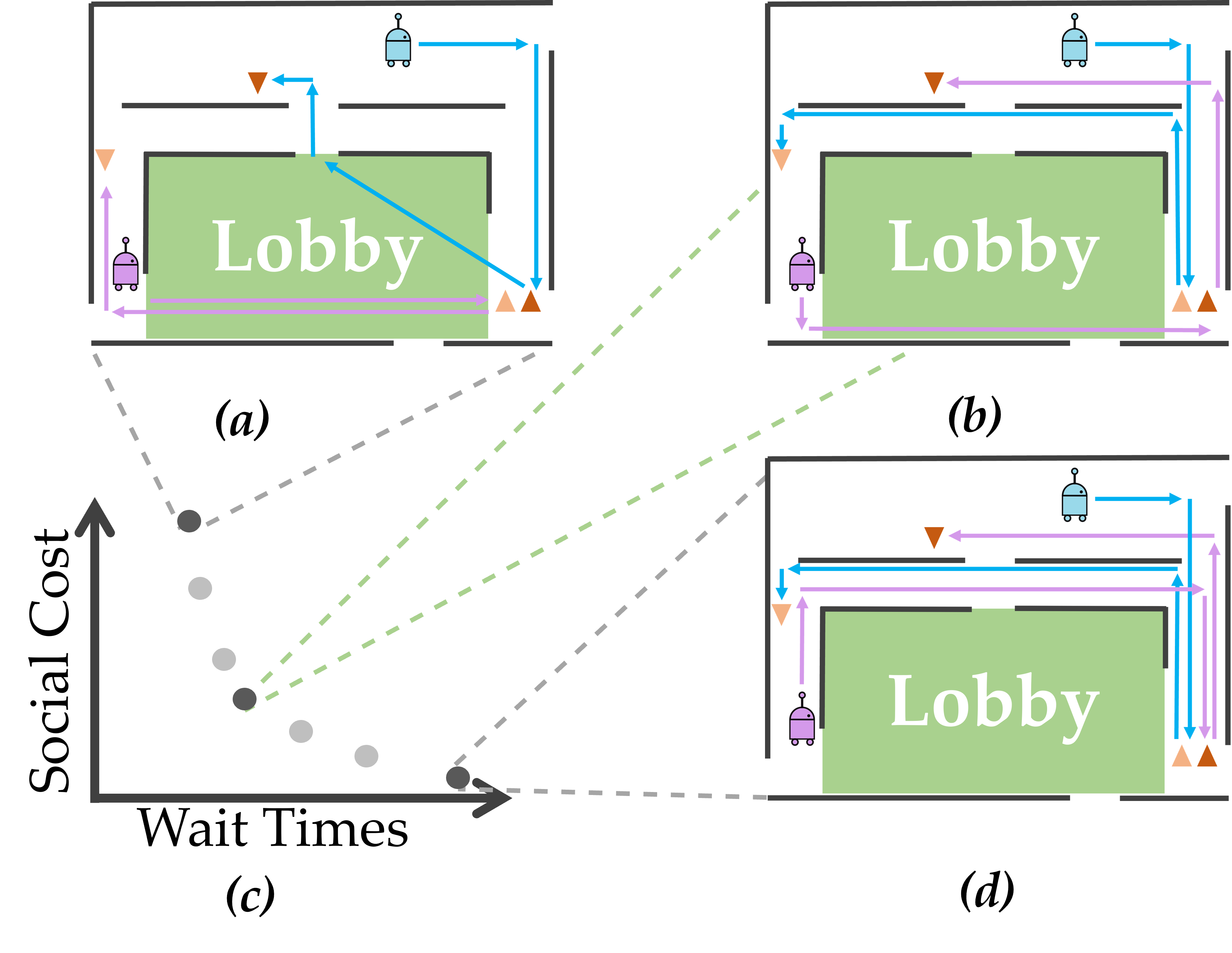}    
    \caption{Schematic example for multi-objective multi-robot pickup and delivery.
    Two robots are required to deliver two packages (dark and light orange) from a pickup location (triangle up) to a delivery point (triangle down).
    Different system plans vary in their trade-off between task wait times and a social cost for traversing a lobby with high foot traffic.
    The plot in (c) shows the values of the two objectives functions attained by the different system plans, called the \textit{Pareto front} of the multi-objective optimization problem.
    }
    \label{fig:Intro_example}
\end{figure}
Figure \ref{fig:Intro_example} shows an example for MRPD with two competing objectives: minimizing wait time of deliveries and avoiding a lobby area where robot traffic is undesired.
The system plan shown in (a) prioritizes QoS, while (d) avoids robot traffic in the lobby whenever possible. 
Arguably, both solutions are not ideal. On one hand, only optimizing QoS completely ignores the social aspects. On the other hand, when robots try to avoid human-centered areas at all cost, the wait times become very high, as illustrated in (c). When tasks have deadlines, the fleet might then not be able to deliver all packages on time. Thus, an important challenge for deploying MRPD systems is balancing between different objectives, \textit{i.e.,} finding intermediate trade-offs. For instance, (b) shows a system where the lobby is only traversed once, allowing the purple robot to still deliver its delivery on time.

Which trade-off is most appropriate often depends on various stakeholders such as the system operator and the people present in the environment.
In this paper, we study how we can find a set of MRPD plans \textit{i.e.,} policies, that lead to different trade-offs between such competing objectives. 
This gives the users of an MRPD system a palette of options to choose a solution from that fits their individual preferences.

We pose MRPD as a multi-objective optimization (MOO) problem, and seek to find policies with \textit{Pareto}-optimal trade-offs, \ie policies where neither objective can be improved without impairing another objective.

This problem bears two unique challenges: 1) due to computational hardness, it is usually not feasible to compute optimal solutions for MRPD. Thus, the computed trade-offs are not necessarily Pareto-optimal, but only heuristic solutions. 
Moreover, even heuristic solutions are often computationally burdensome to obtain. Therefore the number of calls to an MRPD solver should be minimized, making it impractical to generate a large ground set of MRPD policies and then selecting a representative subset based on the obtained trade-offs. 
2) in \emph{online} MRPD, the requests' arrival times as well as pickup and delivery locations are usually following a stochastic process. 
Thus, only exploring trade-offs for one specific sequence of requests is of limited interest, since this would represent only a certain time window (\eg day) of the deployment of an MRPD system. Arguably, it is more relevant to find trade-offs between the different objectives that lead to similar behaviour across different time windows, \ie different realizations of the stochastic process. Thus, each policy is associated with a \textit{distribution} of cost trade-offs.

We formulate the problem of finding a set of MRPD policies such that the expected values of their cost distributions represent the Pareto-front when optimizing for the expected costs. However, the cost distributions should also remain statistically significantly different from one another such that each policy retains unique characteristics, \ie is a distinct option for operating the MRPD system.
We use the popular approach of applying a linear scalarization to convert the MOO problem into a single objective function constituted by a weighted sum of the competing objectives. Each possible choice of scalarization weights then corresponds to an MPRD policy. However, picking weights that lead to a desirable set of policies is not trivial, since the relationship between weights and MRPD costs is often non-linear \cite{botros2022error, branke2008multiobjective}. Picking regularly spaced weights can lead to several policies having similar behaviours, \ie trade-offs, while other possible system behaviours are not covered by any policy.
Therefore, we propose an adaptive sampling algorithm to find different scalarization weights. Our approach minimizes the \textit{dispersion} of means of the distribution of cost trade-offs. Further, when adding new policies, we consider the variance of the MRPD costs to avoid choosing policies that are statistically indistinguishable.

\subsection{Contributions}
The main contributions of our work are as follows: 
\begin{enumerate}
    \item We pose the problem of stochastic multi-objective MRPD. To reflect the stochastic nature of task arrivals, the formulation considers statistical variance of the objectives in addition to their mean values.
    
    \item We propose an adaptive sampling algorithm to find MRPD policies that approximate the \textit{expected} Pareto-front and ensures that MRPD plans are statistically \textit{distinguishable}.
    
    \item We establish completeness of the proposed algorithm.
    
    \item We provide example MRPD configurations for three different objective functions building on a state-of-the-art algorithm for single-objective MRPD.

    \item In simulation experiments, we showcase our approach for two different MRPD scenarios with varying numbers of tasks and robots and demonstrate its advantages compared to several baseline approaches. 

    
\end{enumerate}

\subsection{Related Work}

Many robotic planning problems face the challenge of being required to simultaneously optimize multiple objectives, for instance in path and trajectory planning \cite{ren2022multi, botros2022error,wilde2020improving}, autonomous driving \cite{tunable_traj_planner_urban, karkusdiffstack, botros2022tunable, zuo2020mpc} transportation and
mobility on demand \cite{cap2018multi}, multi-robot planning \cite{ren2021multi, cai2021probabilistic, ren2021multiInterval, schillinger2017multi, Wilde2021LearningSubmod}.

Designing robotic systems that face multiple-objectives requires representations of Pareto-fronts, which is a well-known problem in optimization \cite{zitzler1999multiobjective, branke2008multiobjective, schutze2020pareto, teich2001pareto, pereyra2013equispaced}, but also studied for specific robotics applications \cite{saberifar2022charting, lee2018sampling, botros2022error, cap2018multi,lavalle1998optimal, parisi2017manifold}. 
A common approach to multi-objective optimization is linear scalarization, \ie using the weighted sum of the individual objective functions to pose a single optimization problem \cite{branke2008multiobjective}. The set of Pareto-optimal solutions is then approximated by exploring different scalarization weights \cite{zeng2021simultaneous, botros2022error,xu2021multi, wilde2020improving, kent2020human}.
However, finding useful weights is often challenging \cite{botros2022error,branke2008multiobjective,mores2023multi, bhonsale2022ellipsoid}. 

\paragraph{Multi-objective multi-robot problems}
Finding trade-offs between objectives is relevant in many multi-robot problems such as multi-robot task assignment (MRTA) \cite{sadeghi2017heterogeneous, luo2014provably, ray2021multi, wilde2022Online}, dynamic vehicle routing (DVR) \cite{bullo2011dynamic, rios2021recent, psaraftis2016dynamic, botros2023optimizing}, multi-robot pickup and delivery (MRPD) \cite{bai2022group, camisa2022multi, ritzinger2016survey}, automated mobility-on-demand (AMoD) systems \cite{alonso2017demand, simonetto2019real}, and multi-agent path finding (MAPF), among others.

The authors of \cite{cap2018multi} proposed a method for exploring different Pareto-optimal trade-offs for customer service and operation cost for AMoD systems. 
Our work considers a general multi-objective MRPD formulation without being constrained to specific objective functions. Similar to \cite{cap2018multi} we also explore different system plans by finding scalarization weights. However, the work in \cite{cap2018multi} is based on uniform sampling and average values, while we propose an adaptive sampling method for stochastic multi-objective optimization problems and demonstrate its advantages over uniform sampling.
In the context of MRTA, the work of \cite{wei2020particle} considers the problem of balancing the total team cost and workload balance by simultaneously minimizing the average and maximum cost of robot tours. 
The authors of \cite{tolmidis2013multi} pose an MRTA problem that considers task completion times in conjunction with energy consumption of the individual robots as well as task priorities.
Multi-objective trade-offs are also considered in multi-agent path finding (MAPF) \cite{ren2021multi, ren2021multiInterval}.
In MAPF, the objective is to find \emph{collision free} paths for a fleet of agents, \ie agents are not allowed to be at the same location at any given time. In contrast, our MRPD formulation does not consider inter-agent collisions, but rather focuses on a pickup and delivery requests arriving online.
Moreover, multi-objective task specification in temporal logic for multi-robot systems were studied in \cite{schillinger2017multi, cai2021probabilistic}.
Finally, a recurring problem in multi-robot systems is finding the optimal fleet for a given set of tasks \cite{zhao2022graph} as well as fleet sizing \cite{chandarana2018determining, chaikovskaia2021sizing, cap2018multi, rjeb2021sizing}.
Solutions to these problems also solve a multi-objective optimization problem as they balance between the capability of the fleet and its acquisition and operational cost.

In summary, existing work on multi-objective multi-robot systems usually focuses on specific objectives. In contrast, our method is not tailored to certain objective functions and additionally considers the statistical variance in costs due to stochastic task arrivals.

\paragraph{User preferences for competing objectives}
Researchers in human-robot interaction (HRI) study the problem of \emph{reward learning} which seeks to interactively learn a reward or cost function that best describes a user's preference for robot behaviour \cite{jeon2020reward}. Usually, this reward function is modelled as a weighted sum of features  \cite{abbeel2004apprenticeship, hadfield2016cooperative, sadigh2017active, biyik2019asking, Wilde2021LearningSubmod,  wilde2020improving}, which is equivalent to the linear scalarization of a MOO problem. Thus, learning a user's reward function is equivalent to finding the Pareto-optimal trade-off that best fit their preferences.
{Furthermore, effectively computing a representative set of Pareto-optimal solutions improves how well system behaviour can be adapted to user preferences \cite{botros2022error}.}
In our earlier work, we studied the problem of learning user preferences for material transport with consideration of task efficiency and following social norms \cite{wilde2019bayesian, wilde2020improving}. However, these works considered a set of individual start-goal transportation tasks. In contrast, this paper focuses on online MRPD where we take a fleet of robots and multiple trips per vehicle into account with stochastic task arrivals.

\paragraph{Approximating Pareto-fronts}
Given the wide-spread applications of multi-objective optimization, several fundamental techniques for computing Pareto-fronts have been studied over the years \cite{zitzler1999multiobjective, branke2008multiobjective, schutze2020pareto, teich2001pareto, pereyra2013equispaced}.
However, popular approaches such as gradient descent methods, evolutionary algorithms \cite{branke2008multiobjective} or random walks \cite{lee2018sampling} assume that objective values can be easily obtained and thus make use of frequently evaluating the objectives for different parameters. Computing MRPD solutions is computationally burdensome even when using heuristic solutions, making such approaches impractical.

Closely related to our work, the work presented in \cite{mores2023multi, bhonsale2022ellipsoid} considers MOO under uncertain parameters, and pose a Pareto-approximation problem where samples are required to be statistically significantly different, {focusing on applications in chemical engineering}. 
{Similar to our work, this approach iteratively places new samples on the Pareto-front using a divide-and-conquer (DC) approach to find new weights. Both algorithms stop dividing when solutions are no longer significantly different. However, a key difference is that  \cite{mores2023multi, bhonsale2022ellipsoid} chooses new Pareto-samples by uniformly placing weights (similar to a breadth-first-search).
In our work, we place new weights such that we greedily minimize dispersion, \textit{i.e.,} the distance in the objective space. 
This makes our method more sample efficient and results in a more homogeneous coverage of the Pareto-front when the number of samples is limited.
}%
Moreover, we provide a theoretical analysis establishing completeness of our approach.
We compare both methods in simulations.

The authors of \cite{tesch2013expensive} studied the MOO for robotics when objectives are expensive to evaluate. Their method replaces fitness functions used in GA with an expected improvement in hypervolume. However, this requires a surrogate objective function but no principal approach is given in \cite{tesch2013expensive}. In contrast, our work is based on a greedy placement of new samples to reduce the \textit{dispersion}, a measure for the distance between points on the Pareto-front. This can be directly approximated from the objective values that have been already sampled.
Our earlier work \cite{botros2022error} studies the problem for finding a Pareto-approximation with bounded regret for general multi-objective problems formulated as weighted sums. The number of samples and thus computations of objective values is budgeted, however, a limiting assumption is that an optimal solution can be obtained for any given weight. This makes the solution from \cite{botros2022error} unsuitable for multi-objective MRPD since computing exact solutions for MRPD is computationally prohibitively expensive for most practically relevant instances.
Similar to our work, the authors of \cite{kim2006adaptive} propose an adaptive weighted sum (AWS) method for finding weights in order to approximate the Pareto-front. The AWS method iteratively identifies patches of the Pareto-front that require additional samples. Each patch is subsequently refined by adding constraints to the optimization problem and solving it again. Unfortunately, such constraints cannot be directly incorporated in an online MRPD formulation. Thus, our method does not rely on constraints but adds new weights guided by \textit{dispersion}, a measure for the largest gaps in the current approximation of the Pareto-front. We iteratively add new sample solutions by selecting new weights as the midpoint of existing weights where difference in costs is largest and then optimizing for the new weights.

Lastly, in this paper we consider that the MOO problem has stochastic objective values since some inputs might be random variables, \textit{e.g.,} the transportation requests appear following a stochastic process. 
The authors of \cite{teich2001pareto} study uncertain objectives where costs cannot be computed exactly. For the case that attainable objective values fall into known bounds, they propose a notion of \textit{probabilistic dominance}. Unfortunately, tight bounds are usually not available in MRPD a-priori.

In summary, the unique challenges of the multi-objective MRPD problem studied in this paper are the computational hardness and expense of obtaining individual sample solutions, which makes most state-of-the-art Pareto-approximation techniques impractical, and the uncertainty in the problem inputs, requiring samples of the Pareto-front to be selected such that resulting solutions are statistically different.

\section{Problem Formulation}
\subsection{Preliminaries}
{\textbf{Notation:}
Vectors are denoted with bold symbols ($\w$) and we use subscript indices to identify its elements ($w_i$). Upper-case letters denote sets $(S)$, where we identify elements with a superscript index ($s^i$ or $\w^i$).
}

{\textbf{Multi-objective optimization:}}
Consider a multi-objective optimization problem (MOOP) \cite{branke2008multiobjective} where the domain is some vector space $\mathcal{X}$. We want to find a solution $\vect{x}\in \mathcal{X}$ that simultaneously minimizes $n$ different functions, \ie  that solves $\min_{\vect{x}}\{f_1(\vect{x}), \dots, f_n(\vect{x})\}$. In general, the solution to a MOOP is not a unique element $\vect{x}$, but a set of \textit{Pareto-optimal} solutions. 
We briefly review the definitions of \textit{dominated solutions} and the Pareto-front.
\begin{definition}[Dominated solution]
    Given a MOOP and two solutions $\vect{x},\vect{x}'\in\mathcal{X}$, vector $\vect{x}$ \textit{dominates} $\vect{x}'$ when $f_i(\vect{x})\leq f_i(\vect{x}')$ holds for all $i=1,\dots,n$ and there exists a $j\in\{1,\dots, n\}$ where $f_j(\vect{x})< f_j(\vect{x}')$. This is denoted by $\vect{x}\prec\vect{x}'$. 
\end{definition}

\begin{definition}[Pareto Front]
    Given a MOOP, the set of \textit{Pareto-optimal} solutions is the subset of all solutions that are not \textit{dominated} by another solution. This set is referred to as the \textit{Pareto-front}.
\end{definition}

\subsection{Online multi-robot pickup and delivery}

We now revisit the standard MRPD problem where tasks require robots to pickup items in some location in the workspace and then transport them to a different location.

The robot environment is encoded in a weighted graph $G=(V,E,d)$ where $V$ and $E$ are vertices and edges, and weights $d$ describe the duration of traversing an edge.
Let $R=\{r_1,\dots,r_m\}$ be a fleet of $m$ robots that have to serve a set of $n$ pickup and delivery tasks $\T=\{T_1, \dots, T_n\}$.
Each task is a tuple $T=(s,g, t^r, t^d)$ where $s$ and $g$ are vertices in $V$, representing a pickup and drop-off location, $t^r$ is the release time when the task is requested and $t^d>t^r$ is a deadline.
Let $t^a(r)$ be the time a robot $r$ arrives at the pickup vertex $s$, and let $t^f(r)$ be the time robot $r$ arrives at the drop-off vertex $g$.
A task is serviced successfully when $s$ is visited before $g$ and $t^f(r)\leq t^d$. 
Further, let $L_r$ be the set of tasks loaded by robot $r$.
All robots have a capacity $\kappa$, \ie $|L_r|\leq \kappa$ must hold at all times.
Upon visiting a pickup vertex $s$ the respective task $T$ is added to $L_r$ if $|L_r|<\kappa$, when visiting a drop-off $g$ it is removed%
\footnote{Note: To avoid unintentional loading and unloading, we can design the graph $G$ such that $s$ and $g$ are copies of an existing vertex in $V$.}.
We assume that tasks appear online following a stochastic process $\mathcal{Y}$, and their pickup and drop-off locations are sampled randomly from a distribution over the vertices in the graph.

MRPD constitutes two subproblems: which robot serves which task, and what route each robot takes.
\paragraph{Routing Problem}
For some robot $r$, let $v$ denote its current location.
To serve tasks $\T$, the robot needs to find a tour $\tau$ that starts at $v$ and services all tasks in $\T$. 
Thus, the routing problem solves 
\be
\label{eq:routing}
\min_{\tau} \gamma(\T, \tau),
\ee
where $\gamma(\T, \tau)$ is some non-negative cost function. For instance, this can evaluate if the tasks were delivered on time, or the duration between request and delivery time.

\paragraph{Assignment Problem}
An assignment is a set $\A \subseteq \{(r_i, T_j)| r_i\in R, \, T_j\in\T \}$ such that for every $T_j\in \T$ there exists exactly one pair in $\A$ containing $T_j$, \ie every task is assigned to exactly one robot.
However, a robot can be assigned to multiple tasks; thus, each $r_i$ can appear in multiple pairs in $\A$. Finally, let $\T_i(\A)$ be the set of tasks assigned to robot $r_i$ under $\A$.
{The goal is to find an assignment of tasks to robots, as well as tours for each robot, such that the cost $\gamma$ is minimized for all tasks. The assignment problem is formulated as}
\be
{
\begin{aligned}
\min_{\A} &\sum_{r_i\in R\;} 
\min_{\tau_i}\gamma(\T_i(\A), \tau_i)\\
s.t.\,& \tau_i \text{ serves all tasks }\T_i(\A),\\
&\T_1(\A)\cup\dots\cup \T_m(\A) = \T.
\end{aligned}
}
\label{eq:MRPD_obj}
\ee
{The nested optimization can be solved in a two-stage coupled approach: First, optimal tours for potential pairing of groups of tasks and robots are computed. Based on these tours, a group of tasks is assigned to each robots \cite{alonso2017demand}.}

{In an \textit{offline} problem all tasks are known before robot deployment such that an assignment of tasks and routes for all robots are computed \textit{offline} and then executed. However, in many practical problems not all tasks are known initially: further tasks might be requested while other tasks are already being serviced. Such \textit{online} settings require frequently adding new tasks to the current assignment and replanning routes to optimally accommodate new requests. Further, our formulation does not consider inter-agent collision. Instead, we assume that collisions are avoided by a low-level controller and the average travel times are abstracted by the cost of edges on the graph.
}
The task arrival can be modelled with a random process $\mathcal{Y}$, making the set of tasks $\T$ is a partially observed random variable. An optimal assignment $\A$ is found by a \emph{policy} $\pi$ that recomputes the current assignment and routes periodically as new tasks arrive.
Thus, we redefine the cost over a policy $\pi$ and tasks $\T$, denoted by $c(\pi, \T)$.

\subsection{Multi-objective MRPD}

In practice, the performance of an MRPD system might be evaluated by a user with different objectives in mind. 
For instance, when operating in a human-centered workspace, users might have preferences for robot navigation based on several, potentially conflicting objectives.

Thus, we consider a multi-objective optimization (MOO) formulation for MRPD with bounded, positive cost functions $c_1(\pi, \T), \dots, c_n(\pi, \T)$ replacing the objective of \eqref{eq:MRPD_obj} with
\be
\begin{aligned}
\min_{\pi} &
\{
c_1(\pi, \T), c_2(\pi, \T), \dots, c_n(\pi, \T)
\}.
\end{aligned}
\label{eq:MRPD_MOO}
\ee

We refer to this problem as multi-objective MRPD, abbreviated as MO-MRPD.
In this paper, we are interested in exploring possible Pareto-optimal trade-offs to help system operators to efficiently deploy robot fleets.
Since the task arrivals are stochastic, the costs $c_1(\pi, \T), \dots, c_n(\pi, \T)$ are random variables, which we collect in a vector $\cvect(\pi, \T)=[c_1(\pi, \T)\, \dots\, c_n(\pi, \T)]$. Further, let $\vect{\mu}(\pi)$ be the vector containing the expected values of $\cvect(\pi, \T)$, \textit{i.e.,} $\mu_i(\pi) = \mathbb{E}_{\T}[c_i(\pi, \T)]$.
Using the mean costs, we introduce the notion of an \textit{expected Pareto front}.

\begin{definition}[Expected Pareto Front]
Given cost functions $c_1(\pi,\T),\dots, c_n(\pi,\T)$, the \textit{expected Pareto-front} {$\mathcal{P}(\T)$} is the set of Pareto-optimal solutions to the MOOP $\min_{\pi}
\{
\mu_1(\pi, \T), \mu_2(\pi, \T), \dots, \mu_n(\pi, \T)
\}$.
\end{definition}

To formalize the goal of our problem, we define dispersion as a measure of distance between points on the expected Pareto-front.
\begin{definition}[Dispersion]
\label{def:dispersion}
Given an MO-MRPD instance with cost functions $c_1(\pi,\T),\dots, c_n(\pi,\T)$, let $\Pi=\{\pi^1, \dots, \pi^k\}$ be a collection of policies. 
The dispersion of $D(\Pi)$ is the maximum distance between a point $\vect{p}$ on the expected Pareto-front $\mathcal{P}(\T)$ and the closest mean cost vector $\vmu(\pi^i, \T)$ for any $i=1,\dots, k$. {In detail,
\be
D(\Pi) = \max_{\vect{p}\in \mathcal{P}(\T)} \min_{\pi \in \Pi} ||\vmu(\pi,\T) - \vect{p}||_2.
\ee
}
\end{definition}

{
Lastly, we consider two different policies $\pi^i$ and $\pi^j$ to be \textit{statistically distinct} if their corresponding multi-variate distributions $\cvect(\pi^i, \T)$ and $\cvect(\pi^j, \T)$ are statistically significantly different. This can be captured with common statistical measures such as hypothesis tests \cite{thomas2006elements}.}

\begin{figure}[t]
    \centering
   \includegraphics[width=0.7\linewidth]{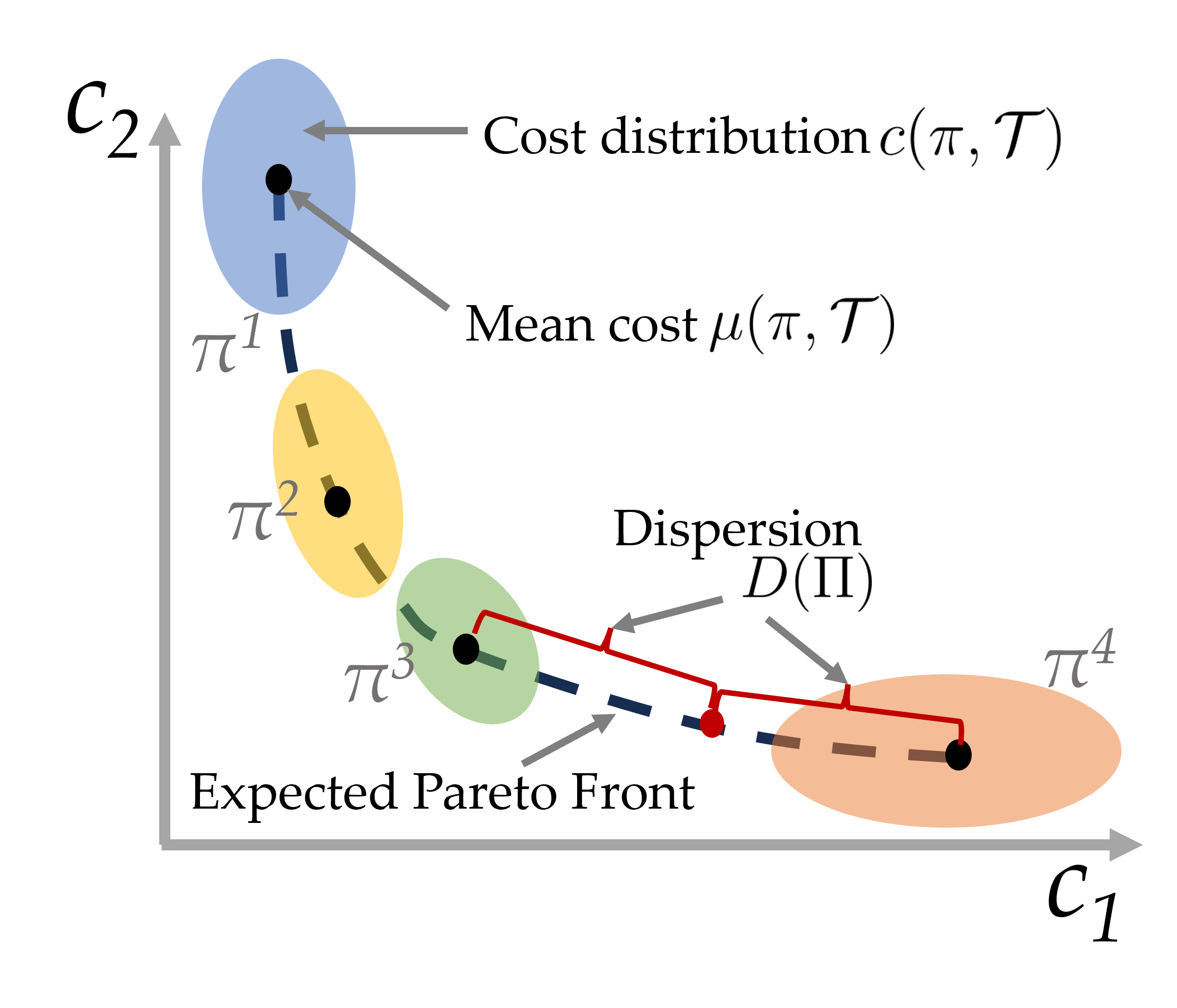}    
    \caption{{Illustration of cost distributions, mean cost, expected Pareto-front, and dispersion for four different policies $\pi_1,\dots,\pi_4$ solving a problem with two objectives.}
    }
    \label{fig:dispersion}
\end{figure}
\subsection{Problem Statement}
Building on the the preliminary concepts, we formally state the problem of approximating the set of optimal solutions to an MO-MRPD problem.

\begin{problem}[Approximating MO-MRPD]
\label{prob:MOMRPD}
Given a graph $G$, a fleet of $m$ robots, a stochastic process $\mathcal{{Y}}$ to generate a sequence of tasks $\T$, some cost functions $c_1(\pi,\T), \dots, c_n(\pi,\T)$ and some integer $K>0$, find a set of policies $\Pi=\{\pi^1, \dots, \pi^k\}$ where $k\leq K$ such that

\begin{enumerate}[label=(\roman*)]
    \item For any policy $\pi\in \Pi$ and a realization of the task sequence $\T$ the cost vector $\cvect(\pi, \T)$ is Pareto-optimal.
    \item The \textit{dispersion} $D(\Pi)$ is minimized.
    \item Any two policies $\pi^i, \pi^j \in \Pi$ represent different behaviours, \ie are statistically distinct.
\end{enumerate}

\end{problem}

Solving Problem \ref{prob:MOMRPD} provides a system operator with sampled options for fleet behaviour, that are optimal, approximate any optimal behaviour that is not part of the samples, and are distinguishable from one another.
%

\subsection{Generalization}
{Problem \ref{prob:MOMRPD} can be generalized to include a broader range of planning problems. Consider any robotic planning problem that i) optimizes for multiple objective functions, and ii) has one or multiple random variables as its input, \textit{e.g.,} random task arrivals or task locations, random service times or random travel times and random rewards. Exploring the solutions space for such problems can then be formulated as in Problem \ref{prob:MOMRPD}, \textit{i.e.,} finding solutions that are Pareto-optimal, represent the expected Pareto-front and are statistically distinct with respect to the randomness of the inputs.
Thus, our solution technique presented in the following section can be adapted to problems such as Dynamic Vehicle Routing and MRTA with stochastic task arrivals, single-robot path planning or MAPF in dynamic environments, the stochastic Canadian traveller problem, or orienteering and informative path planning with stochastic travel times or rewards.
For the remainder of this paper we will focus on MO-MRPD.}

\section{Approach}
\label{sec:approach}

We begin with characterizing the computational hardness of Problem \ref{prob:MOMRPD} and then present our approach to approximating MO-MRPD solutions. In essence, we cast the multi-objective problem into a scalarized single objective optimization, where the cost functions are traded-off with weights. We then solve Problem \ref{prob:MOMRPD} by adaptively selecting scalarization weights and thus MRPD policies that result in a set of Pareto-optimal trade-offs. 
The proposed method does not assume specific cost functions $c_1(\pi, \T), \dots, c_n(\pi, \T)$, it only requires them to be bounded.
For a case study, we show how three specific cost functions relevant in MRPD problems can be incorporated in an MRPD solver in Section \ref{sec:example_costs}.

\subsection{Hardness Results}
Problem \ref{prob:MOMRPD} is related to the multi-objective shortest path (MOSP) problem which is known to be NP-hard \cite{ehrgott2005multicriteria}.
In general, single-objective MRPD is a variation of dynamic vehicle routing and is already NP-hard for commonly used cost functions such as minimizing delivery time. Yet, MO-MRPD is also computationally intractable even for a single robot and only one task (in which case single-objective MRPD would be trivial).
\begin{lemma}[Hardness result]
\label{lem:hardness}
    MO-MRPD with a single robot and one task is NP-hard.
\end{lemma}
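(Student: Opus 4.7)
The plan is to reduce from the multi-objective shortest path problem (MOSP), which is already cited in the excerpt as NP-hard. Given an MOSP instance consisting of a graph $G'=(V',E')$, edge weight vectors $\vect{w}:E'\to \R_{\geq 0}^n$, and a source--target pair $(s,t)$, I will construct in polynomial time an MO-MRPD instance whose expected Pareto-front is in one-to-one correspondence with the Pareto-optimal $s$--$t$ paths in $G'$.

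First, I would let the workspace graph $G=(V,E,d)$ coincide with $G'$, assigning arbitrary (small, positive) edge durations $d$ purely for scheduling bookkeeping. The fleet is a single robot $r_1$ initialized at $s$, and the task set $\T$ consists of a single task $T=(s,t,0,\infty)$ released at time $0$ with a vacuous deadline. The stochastic process $\mathcal{Y}$ is degenerate, producing this one task deterministically, so the expected Pareto-front coincides with the Pareto-front for the single realization. I would then define the $n$ cost functions $c_i(\pi,\T)$ as the sum, over edges actually traversed by $\pi$ between the robot's arrival at $s$ and its arrival at $t$, of the $i$-th coordinate of $\vect{w}$. These cost functions are non-negative and can be bounded (restricting to simple routes, see below) by $\sum_{e\in E'}w_i(e)$, so the boundedness requirement on the $c_i$'s is met.

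Next, I would argue that the policy space collapses exactly to $s$--$t$ paths in $G'$. Because there is only one task and one robot, the assignment problem is trivial, leaving only the routing problem from $s$ to $t$. Any policy containing a cycle can be shortcut into a simple path without increasing any $c_i$ (since the weights are non-negative), so no cyclic route can lie on the Pareto-front. Hence Pareto-optimal MRPD policies are in bijection with Pareto-optimal $s$--$t$ paths in $G'$. Consequently, any algorithm that approximates or enumerates the expected Pareto-front of MO-MRPD in the sense of Problem~\ref{prob:MOMRPD} can be used to solve MOSP, transferring NP-hardness.

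\textbf{Main obstacle.} The routine content of the reduction is straightforward; the subtle point is justifying that the generic MO-MRPD framework actually admits cost functions of this edge-additive form, since Section~\ref{sec:approach} only states that costs must be bounded and positive. I would address this by noting explicitly that the formulation in \eqref{eq:MRPD_MOO} places no structural restriction on $c_i(\pi,\T)$ beyond boundedness, so edge-sum costs over the robot's executed trajectory are admissible. A secondary point to verify carefully is that the shortcutting argument indeed eliminates cyclic policies from the Pareto-front: this follows directly from component-wise non-negativity of $\vect{w}$, so removing any closed sub-walk weakly improves every $c_i$ and hence cannot produce a dominated replacement.
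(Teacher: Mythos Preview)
Your proposal is correct and follows essentially the same reduction as the paper: construct an MO-MRPD instance from a MOSP instance by placing a single robot at the source, issuing one task to the target with a vacuous deadline, and taking the MOSP edge-costs as the MRPD objectives, so that Pareto-optimal policies coincide with Pareto-optimal $s$--$t$ paths. Your treatment is in fact slightly more careful than the paper's (you explicitly handle the degenerate stochastic process, boundedness of the $c_i$, and the shortcutting of cyclic routes), but the core argument is identical.
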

\begin{proof}
    This is shown by a reduction from MOSP to MO-MRPD. A MOSP instance is constituted by a graph $G=(V,E)$, start and goal vertices $s,g \in V$ and some cost functions $\gamma_1,\dots, \gamma_n$ assigning costs to edges \cite{ehrgott2005multicriteria}. Its decision version answers if there exists a path $P$ between a given start and goal vertices $s,g \in V$ such that $\sum_{e\in E(P)}\gamma_i(e)\leq \alpha$ for all $i$ and some constant $\alpha$.
    We convert this into an input of an MO-MRPD instance where  a single robot is initially located at $s$, and one task task requiring the robot to go from $s$ to $g$ before an arbitrarily late deadline. The MRPD cost functions are then the MOSP cost functions $\gamma_1,\dots, \gamma_n$. 
    For a sufficiently large budget $K$ the solution to the MO-MRPD instance is a set of policies that correspond to all Pareto-optimal paths from $s$ to $g$. This trivially answers the MOSP instance.  
\end{proof}

In summary, Lemma \ref{lem:hardness} shows that MO-MRPD is hard even if there is only a single task such that the assignment part of the problem becomes trivial and the routing part is reduced to finding a path between two vertices. Thus, the computational challenge of MO-MRPD does not only stem from complexity of the underlying single-objective MRPD, but finding a set of policies that correspond to multi-objective solutions as described in Problem \ref{prob:MOMRPD} is itself another intractable problem.

\subsection{Scalarization of MO-MRPD}

A common approach to solve multi-objective optimization (MOO) problems such as \eqref{eq:MRPD_MOO} is using scalarization to obtain a single-objective function. The most common form is linear scalarization where the objectives are combined in a weighted sum:
\be
\label{eq:MRPD_LSMOOP}
\min_{\pi} 
\underbrace{w_1 c_1(\pi(\w), \T)+\dots + w_n c_n(\pi(\w), \T)}_{\w \cdot\cvect(\pi(\w), \T)}.
\ee
The weight vector $\w=[w_1, \dots, w_n]$ describes a trade-off between the different MRPD objectives and thus is an input to the policy $\pi(\w)$ for solving the routing and assignment problem. Hence, using a lineaer scalarization, Problem \ref{prob:MOMRPD} becomes one of finding a set of weights $\Omega=\{\w^1, \dots, \w^K\}$.
Without loss of generality, we assume that $\w$ lies in the set 
\be
\W=\{\w\in\mathbb{R}^n_{\geq0}\ | \sum_i w_i =1 \},
\ee 
which we refer to as the weight space.
Further, we assume that we have access to a policy $\pi(\w)$ that solves the MRPD problem for the scalarized cost function in \eqref{eq:MRPD_LSMOOP} for given weights $\w$. In the next Section \ref{sec:example_costs}, we will adapt a state-of-the-art MRPD solver to obtain such a policy for three exemplary cost functions.

\subsection{Pareto approximation via weight sampling}
\label{sec:algorithm}

We propose an algorithm that finds a set of scalarization weights $\Omega=\{\w^1, \dots, \w^K\}$
such that the corresponding policies $\pi^1, \dots, \pi^K$ solve Problem \ref{prob:MOMRPD}.

A commonly used approach to find different trade-offs of cost functions is sampling weights \emph{uniformly}. However, the corresponding solutions are often not placed uniformly on the Pareto front since the mapping from weights to the objective values is non-linear \cite{botros2022error}. This can lead to several weights yielding similar objective values.
Thus, we propose an adaptive strategy that greedily minimizes the dispersion.\\

\paragraph{Algorithm Description}
\begin{algorithm}[t]	
	\DontPrintSemicolon 
	\KwIn{A graph $G$, robot fleet $R$, a stochastic process for task arrivals $\mathcal{Y}$, MRPD cost functions $c_1, \dots, c_n$, MRPD policy $\pi(\w)$, sampling budget $K$, \# training instances $\eta$, overlap threshold $\Delta$}
	\KwOut{Sets of weights $\Omega$ and expected costs $\Gamma$.}
	 $\I\leftarrow $ Set of $\eta$ random MRPD instances drawn from $\mathcal{Y}$\\
    $\Omega\leftarrow\;\{\vect{e}^1,\dots, \vect{e}^n\}$ \grey{// Standard basis weights}\\
    
    $\Gamma\leftarrow\;\{\mathtt{MRPD}(\w=\vect{e}^1, \I),\dots, \mathtt{MRPD}(\w=\vect{e}^n, \I)$\} \\
    $\Sset\leftarrow\;\{(\vect{e}^1,\dots, \vect{e}^n)\}$ \grey{// Initial set with basis simplex}\\
	\For{$k=n$ to $K$} {
        $\w'\leftarrow \mathtt{Find\_New\_Weight}(\Sset, \Gamma, \Delta)$\\
        $\Cvec_{\I}'\leftarrow\mathtt{MRPD}(\w', \I) $ \grey{ // Compute MRPD cost}\\
        $\Omega, \Gamma,\Sset \leftarrow \mathtt{Update}(\Omega,\Gamma,\Sset, \w', \Cvec_{\I}', \Delta)$\\

    }

	\Return{$\Omega$, $\Gamma$}
	\caption{Adaptive sampling of MOO-MRPD}
	\label{alg:Pareto_Explore}
\end{algorithm}

Algorithm \ref{alg:Pareto_Explore} provides a high-level overview of the proposed approach. {After a detailed description of its components, we discuss a bi-objective example, including an illustration in Figure \ref{fig:alg_example}.}
We maintain a collection of subsets -- in particular \textit{simplexes} -- of the weight set $\W$ from which we iteratively sample new weights and then partition the simplexes further, similar to a \textit{bisection} algorithm in higher dimensions.
The algorithm uses the sub-routine $\mathtt{MRPD}(\w, \I)$ that solves the scalarized problem from equation \eqref{eq:MRPD_LSMOOP} for given weights $\w$ and $\eta$ different task sequences 
$\I=\{\T_1,\dots,\T_{\eta}\}$. 
The function $\mathtt{MRPD}(\w, \I)$ returns the set of cost vectors $C_{\I}(\w)=\{\cvect(\pi(\w), \T_1),\dots,\cvect(\pi(\w), \T_{\eta})\} $. {That is, we compute $\eta$ different cost vectors, each of dimension $n$ to approximate the $n$-dimensional distribution of $\cvect(\w,\T)$.} 

In detail, the algorithm begins by sampling $\eta$ different realizations of the task arrival process $\I=\{\T_1,\dots,\T_{\eta}\}$ (line 1).
Let $\vect{e}^1,\dots, \vect{e}^n$ denote the vectors of the standard basis in $\mathbb{R}^n$, which correspond to solving only the single-objective MRPD problems. First, we add these vectors to the set of sampled solutions $\Omega$ and compute their costs (line 2, 3). The tuple $(\vect{e}^1,\dots, \vect{e}^n)$ is saved in a list of \textit{$n$-simplexes} (line 4).
In the main loop we iteratively find a new candidate sample $\w'$ and compute the cost distribution $\Cvec_{\I}(\w')$ (lines 6 and 7). 
We then update the set of $n$-simplexes $\Sset$ using the new sample $\w'$ (line 8), detailed in Algorithm \ref{alg:update_segments}. 
Finally, the algorithm stops when $K$ sample solutions have been computed.

Next, we will provide details on the two core components of the Algorithm, \textit{i.e.,} how we identify the most promising new weight $\w'$ (line 6), and how we update the simplexes (line 8).

\paragraph{Ensuring statistical difference}
An important characteristic of the Algorithm is that it does not add a new weight $\w'$ when its cost distribution is too similar to the distribution of an existing sample. Thus, functions $\mathtt{Find\_New\_Weight}$ (line 6) and $\mathtt{Update}$ (line 8, and Algorithm \ref{alg:update_segments}), conduct a test for statistical significance.

To that end, let the function $h(\w^i, \w^j)$ 
evaluate the probability of error for a \textit{hypothesis test} between sampled costs $\Cvec_{\I}(\w^i)$ and $\Cvec_{\I}(\w^j)$ \cite{thomas2006elements}.
That is, we compute how likely it is that a statistical test would wrongly conclude that the samples for $\Cvec_{\I}(\w^i)$ correspond to the policy $\pi(\w^j)$. 
Let $\KL(\w^i||\w^j)$ be the \textit{Kullback-Leibler} divergence between $C_{\I}(\w^i)$ and $C_{\I}(\w^j)$.
The probability of error is then derived from the \textit{Chernov-Stein Lemma} \cite{thomas2006elements} as
\be
h(\w^i,\w^j) = e^{-\KL(\w^i||\w^j)}.
\label{eq:hyp_error}
\ee

\paragraph{Finding the next sample}
We now specify the function $\mathtt{Find\_New\_Weight}$ from line 6. Let $\w^i,\w^j$ belong to the same simplex $s$; we refer to the unordered pair $(\w^i,\w^j)$ as an edge of $s$.
The idea is to greedily reduce the dispersion between samples. 
Without having access to the set of Pareto-optimal solutions (which we are trying to approximate), we cannot directly evaluate the dispersion as introduced in Definition \ref{def:dispersion}.
Thus, we use an auxiliary measure: 
Given a simplex $s=(\w^1,\dots, \w^n)$ {let $\vmu^i$ denote the mean cost of the policy optimizing for weight $\w^i$}. We define the \textit{pair-wise dispersion} as $d^{ij} = ||\vmu^i-\vmu^j||$ for all $i,j=1,\dots, k$ and $i\neq j$.

A strong candidate for a new weight is then the midpoint of an edge $(\w^i,\w^j)$ where the pairwise dispersion is largest.
However, the midpoint $\w'$ of edge $(\w^i,\w^j)$ might not yield a new solution, but instead result in the same costs as either $\w^i$ or $\w^j$, \textit{i.e.,} $\vmu'=\vmu^i$ might hold.
To ensure convergence, we introduce a discount factor $\alpha((\w^i,\w^j))$. 
Given an edge $(\w^i,\w^j)$ with mean costs $(\vmu^i,\vmu^j)$, the factor counts how often $\mathtt{Find\_New\_Weight}$ previously returned a weight $\w'$ that was the midpoint of some edge $(\w^l,\w^p)$ with the same mean costs, \textit{i.e.,} where $(\vmu^i,\vmu^j)= (\vmu^l,\vmu^p)$.
Further, let $H_{\Delta}(\w^i,\w^j)$ be a binary variable describing the outcome of the statistics test of distributions $\Cvec_{\I}(\w^i)$ and $\Cvec_{\I}(\w^j)$ with respect to some threshold $\Delta$. We use the convention that $1$ describes the case when $h(\w^i,\w^j)\leq \Delta$ and thus the distributions are sufficiently different.
The \textit{discounted pair-wise dispersion} is then defined as 
\be
\label{eq:disc_disp}
D((\w^i,\w^j))=
\frac{H_{\Delta}(\w^i,\w^j)}{2^{\alpha((\w^i,\w^j))}}
||\vmu^i-\vmu^j||
.
\ee
The function $\mathtt{Find\_New\_Weight}$ selects the edge $e'$ among all simplexes $s\in\Sset$ that maximizes $D(e')$ and returns its midpoint $\w'$.
In Section \ref{sec:theoretical} we further discuss the necessity of the discount factor as part of our proof of convergence.\\

\paragraph{Update Function}
\begin{algorithm}[t]	
	\DontPrintSemicolon 
	\KwIn{Set of weights $\Omega$ and solutions $\Gamma$, set of simplexes $\Sset$, new weight $\w'$, samples from cost distribution $\vect{C}(\w')$, threshold $\Delta$}
	\KwOut{Updated samples, solutions and simplexes, $\Omega$, $\Gamma$, $\Sset$}

    
    \For{$s$ in $\Sset$ where $\w'$ lies in the convex hull of $s$}{
    $\Sset \leftarrow \Sset\setminus\{s \}$\\
    Find weights $\w^i, \w^j\in s$ where $\w'$ is the midpoint\\
    $s^{i} = s\setminus\{{\w^i}\} \cup \{\w'\}$
    \grey{ // Swap in new weight} \\
    $s^{j} = s\setminus\{{\w^j}\} \cup \{\w'\}$
    \grey{ // Swap in new weight} \\
    $ \Sset\leftarrow \Sset\cup \{s^{i},s^{j}\} $ \grey{ // Add new simplexes} 
  
    }
    
    
        \If{$h(\w', {\w})\leq \Delta$ for all $\w$ in $\Omega$}
        {
             $\Omega\leftarrow \Omega \cup \{\w'\}$ \grey{ // Add new sample}\\
        $\Gamma\leftarrow \Gamma \cup \{\Cvec_{\I}'\}$ \grey{ // Save cost distribution}\\
        }

	\Return{$\Omega$, $\Gamma$, $\Sset$}
	\caption{Update simplexes}
	\label{alg:update_segments}
\end{algorithm}

Algorithm \ref{alg:update_segments} updates the set of simplexes as well as sampled weights and solutions.
Since we chose $\w'$ to be the midpoint of weights of a simplex, it lies on a simplex's boundary and thus may be inside more than one simplex. Thus, we split every simplex $s$ containing $\w'$ into smaller simplexes.

To split a simplex $s$, we identify the weights $\w^i$ and $\w^j$ in $s$ for which the new weight $\w'$ is the midpoint (line 3).
We then create two new simplexes by  individually substituting $\w'$ for $\w^i$ and $\w^j$ (lines 4-6).
The new sample $\w'$ is only added to the set of samples $\Omega$ (and its corresponding solution $\Cvec'_{\mathcal{I}}$ to the set $\Gamma$) when it passes the statistics test with respect to all previously sampled solution (lines 7-9).
We illustrate the update function in Figure \ref{fig:alg_example_splits} for an example with three objectives. The initial simplex is defined by the three basis weights (since all weights lie in $\W$, \textit{i.e.,} their elements sum to $1$, we can project the three dimensional case into 2D). In the first iteration, a new sample $\w'$ is placed on the bottom edge, and the simplex is split. In the second iteration, $\mathtt{Find\_New\_Weight}(\Sset, \Gamma)$ may decide to place a new sample on the central edge. In that case Algorithm \ref{alg:update_segments} splits both simplexes $s^2$ and $s^3$.

\begin{figure}[t]
    \centering
    \begin{subfigure}[t]{0.15\textwidth}
            \centering
             \includegraphics[width=0.99\linewidth]{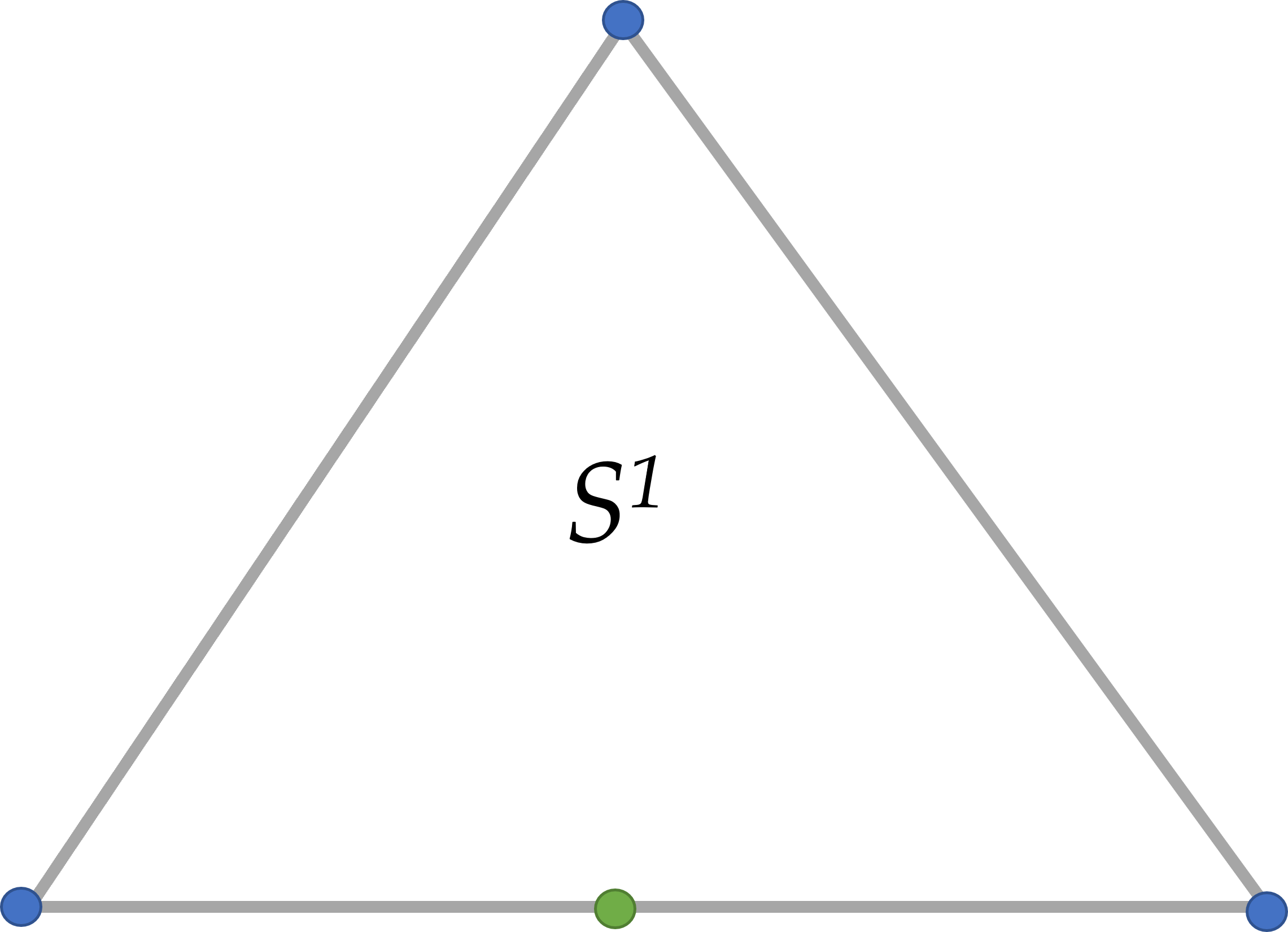}
    \caption{\footnotesize Initial simplex.}
            \label{fig:alg1}
    \end{subfigure}\hfill
    \begin{subfigure}[t]{0.15\textwidth}
         \centering
        \includegraphics[width=0.99\linewidth]{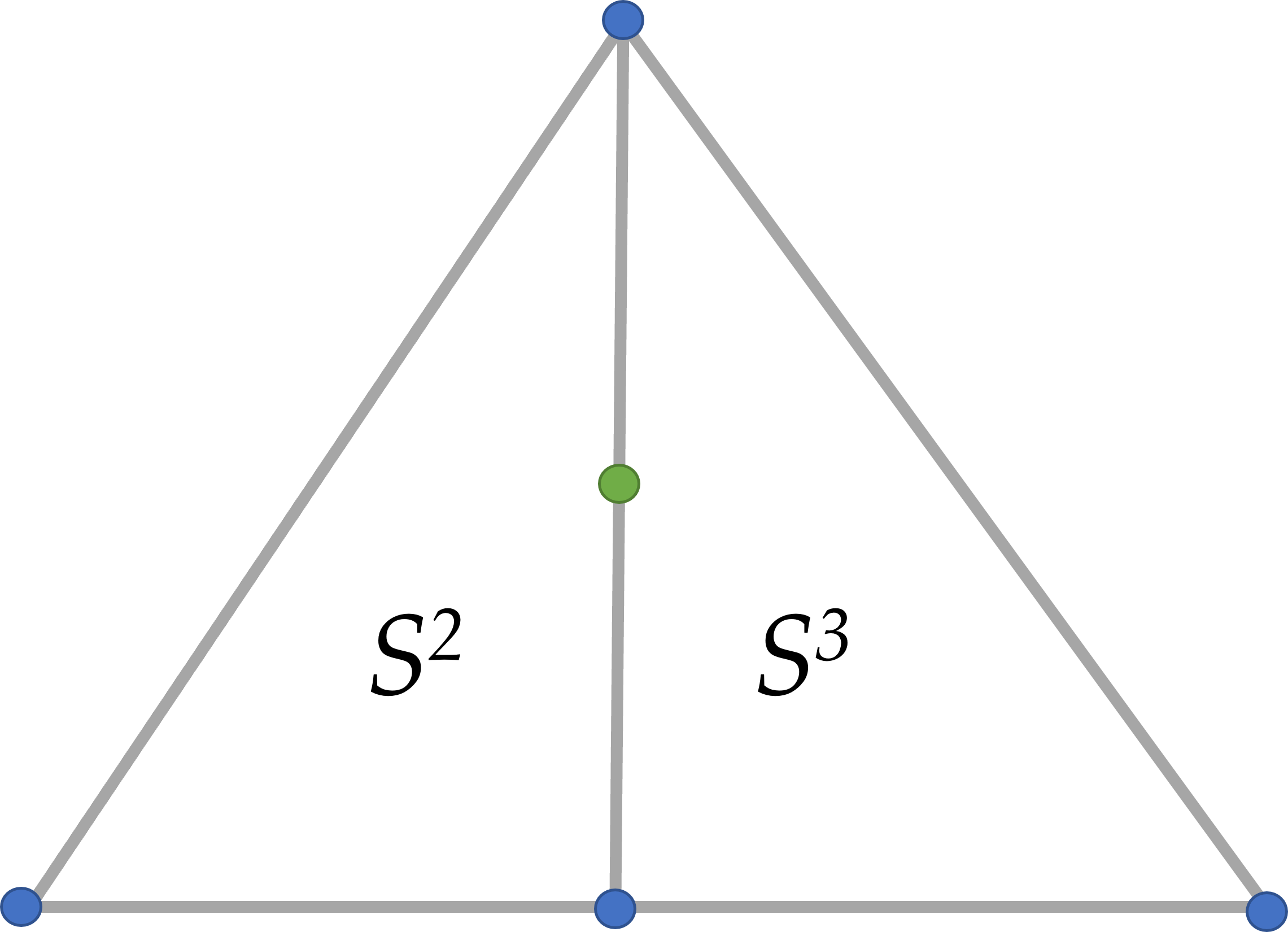}
    \caption{\footnotesize Iteration 1.}
        \label{fig:alg2}
    \end{subfigure}
    \hfill
    \begin{subfigure}[t]{0.15\textwidth}
         \centering
        \includegraphics[width=0.99\linewidth]{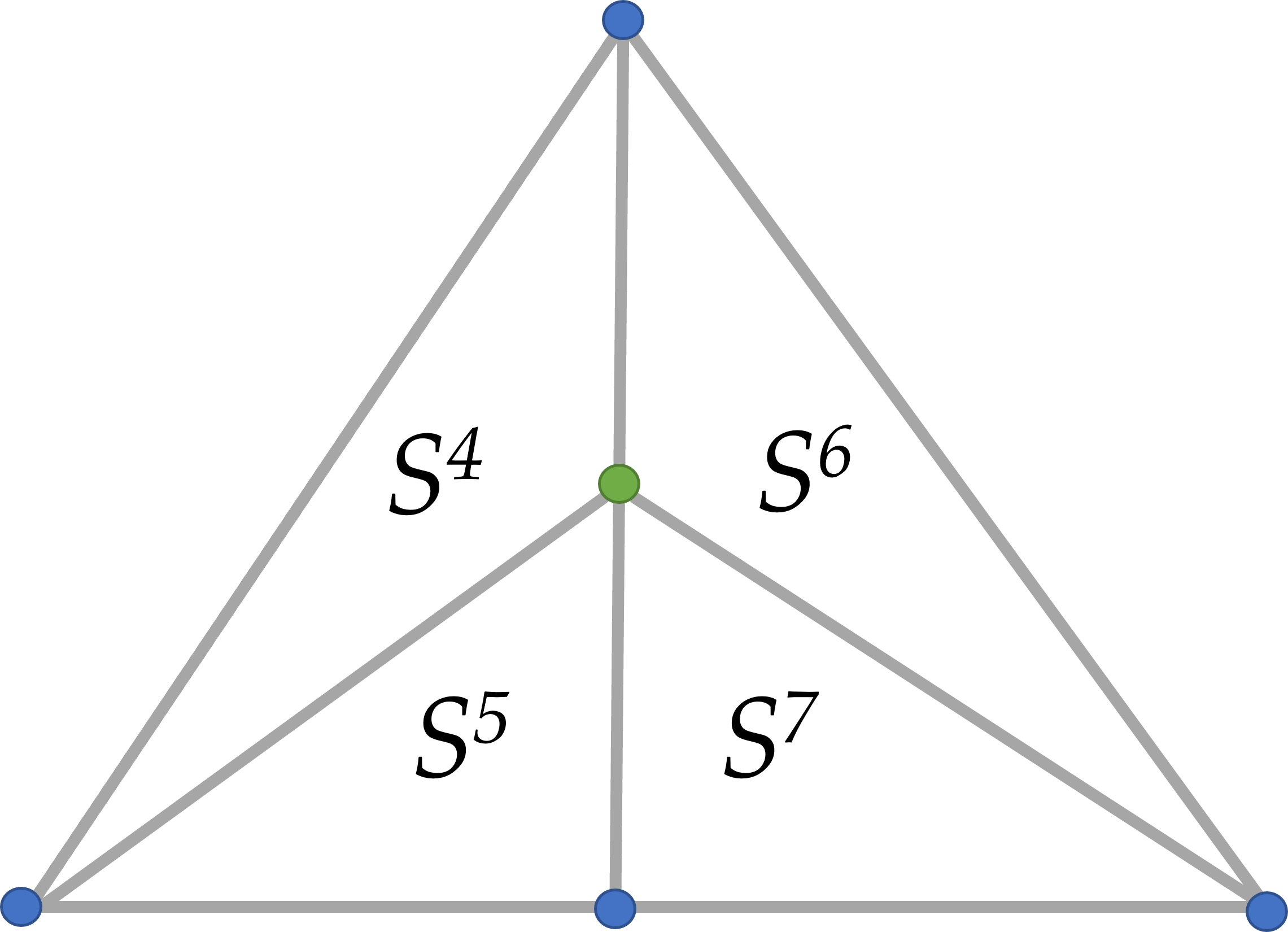}
    \caption{\footnotesize Iteration 2.}
        \label{fig:alg4}
    \end{subfigure}
    \caption{Illustration of multiple splits in Algorithm \ref{alg:update_segments} over two iterations, green indicates the new sample $\w'$.}
    \label{fig:alg_example_splits}
\end{figure}

\paragraph{Example Illustration}
\label{par:example}
To conclude the description of the approach, Figure \ref{fig:alg_example} provides an illustration of the Pareto-approximation constructed by Algorithm \ref{alg:Pareto_Explore} for two cost functions.  
Since we assumed the sum of scalarization weights to be one, we can simplify the notation and represent each weight vector $\w$ simply by its first entry $w$.
Initially, we compute the two basis solutions $w=0$ and $w=1$ to get their cost distributions $\Cvec_{\I}(0)$ and $\Cvec_{\I}(1)$. The algorithm picks $w=.5$ as the midpoint of the only simplex $\{0,1\}$, computes the cost distribution $\Cvec_{\I}(.5)$ and subsequently adds $w=.5$ to $\Omega$. In the second iteration, there are two simplexes $\{0,.5\}$ and $\{.5,1\}$ where the former has a larger dispersion (as illustrated in Figure \ref{fig:alg_example}). Thus, the midpoint $w=.25$ is chosen. 
However, the resulting distribution $\Cvec_{\I}(.25)$ overlaps with $\Cvec_{\I}(0)$, resulting in a high probability of failing a hypothesis test. Thus, $w=.25$ is \textit{not} added to $\Omega$, and future calls of $\mathtt{Find\_New\_Weight}$ will not return the midpoint of the simplex $\{0,.25\}$. Finally, in iteration 3, the midpoint $w=.375$ of simplex $\{.25,.5\}$ is selected. The resulting distribution $\Cvec_{\I}(.375)$ passes the selection criterion and thus is added to the sample set $\Omega$.
\begin{figure}[t]
    \centering
    \begin{subfigure}[t]{0.2\textwidth}
            \centering
             \includegraphics[width=0.95\linewidth]{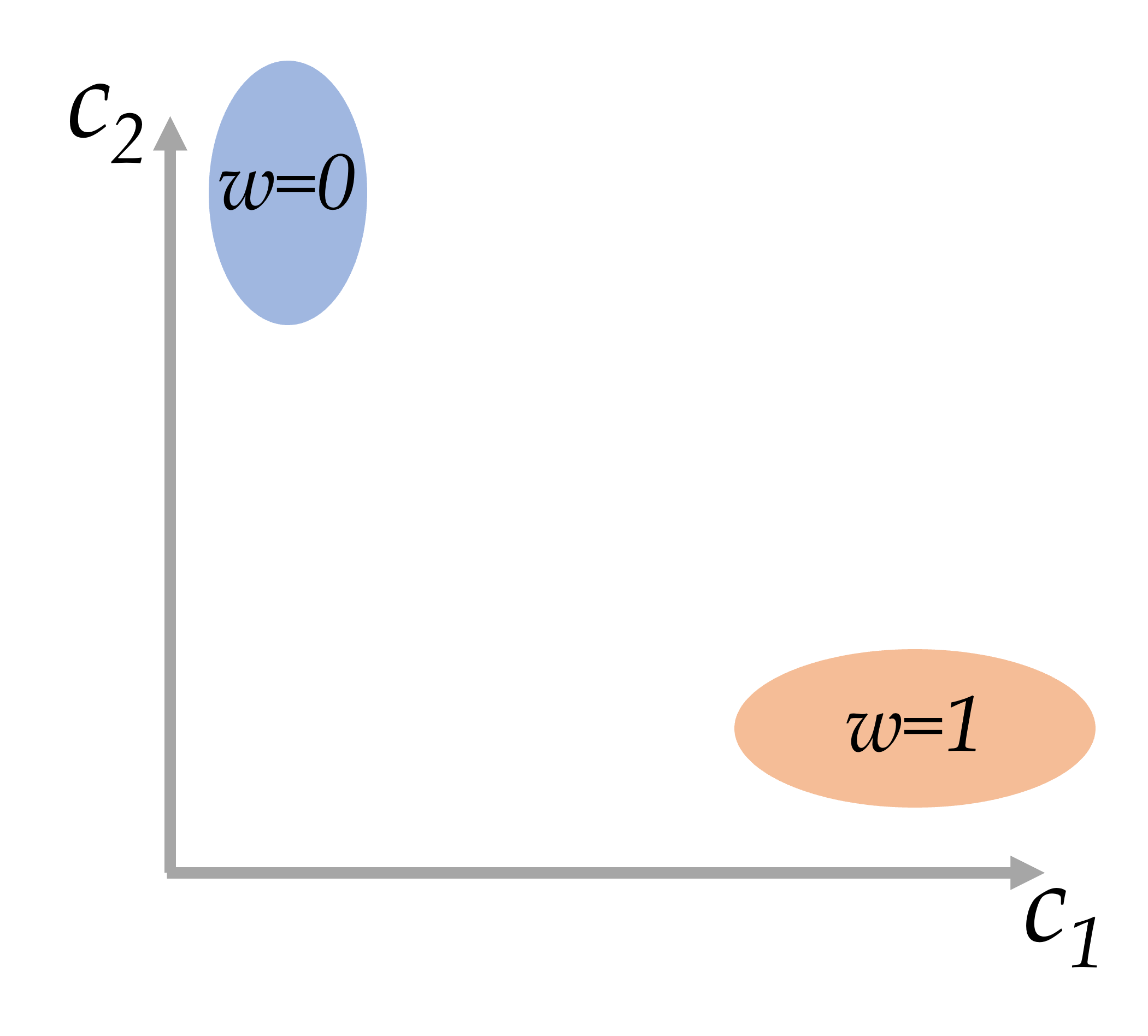}
    \caption{\footnotesize Basic weights $w=0$ and $w=1$.}
            \label{fig:alg_example1}
    \end{subfigure}\qquad
    \begin{subfigure}[t]{0.2\textwidth}
         \centering
        \includegraphics[width=0.95\linewidth]{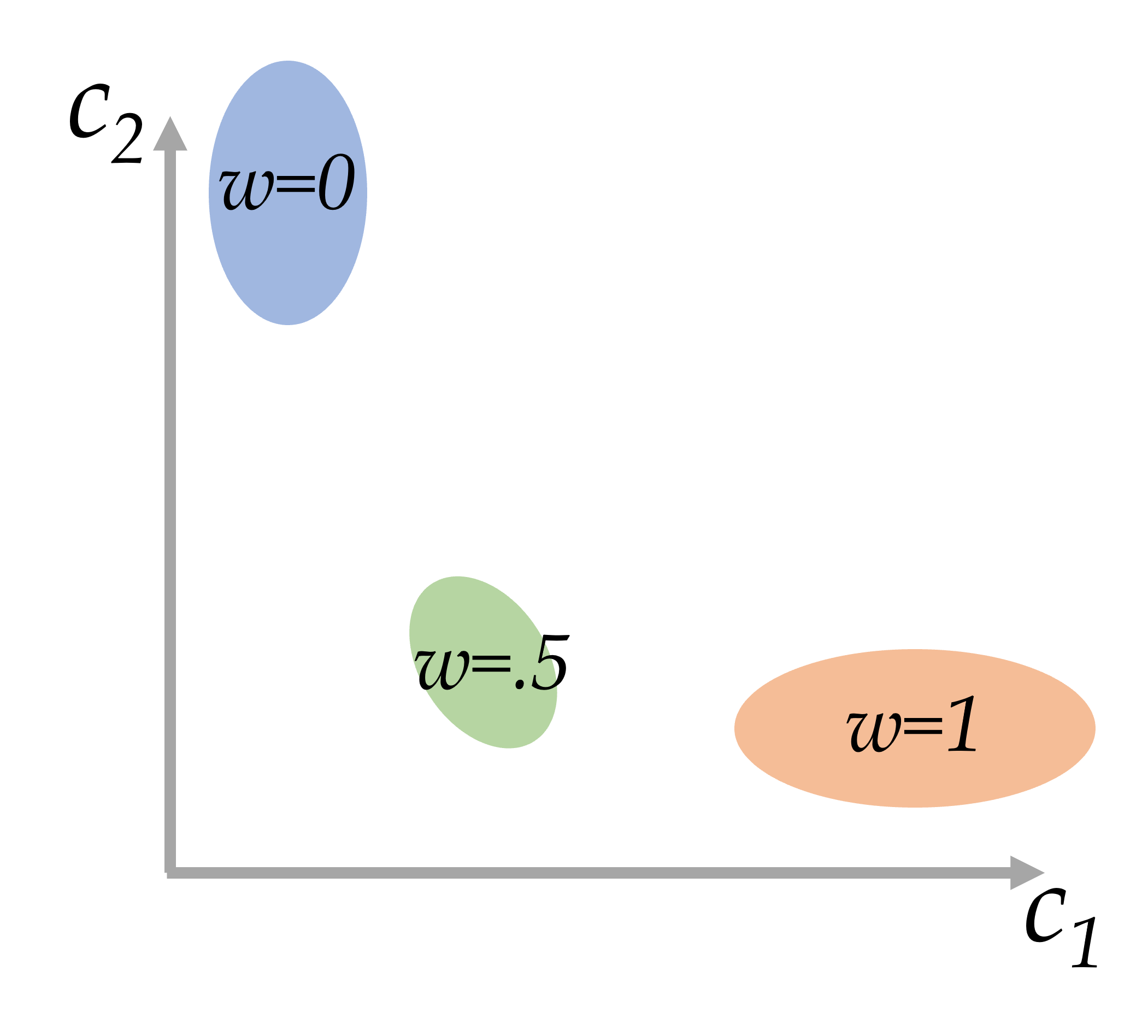}
    \caption{\footnotesize Iteration 1, add $w=.5$.}
        \label{fig:alg_example2}
    \end{subfigure}\\
    \begin{subfigure}[t]{0.2\textwidth}
            \centering
             \includegraphics[width=0.95\linewidth]{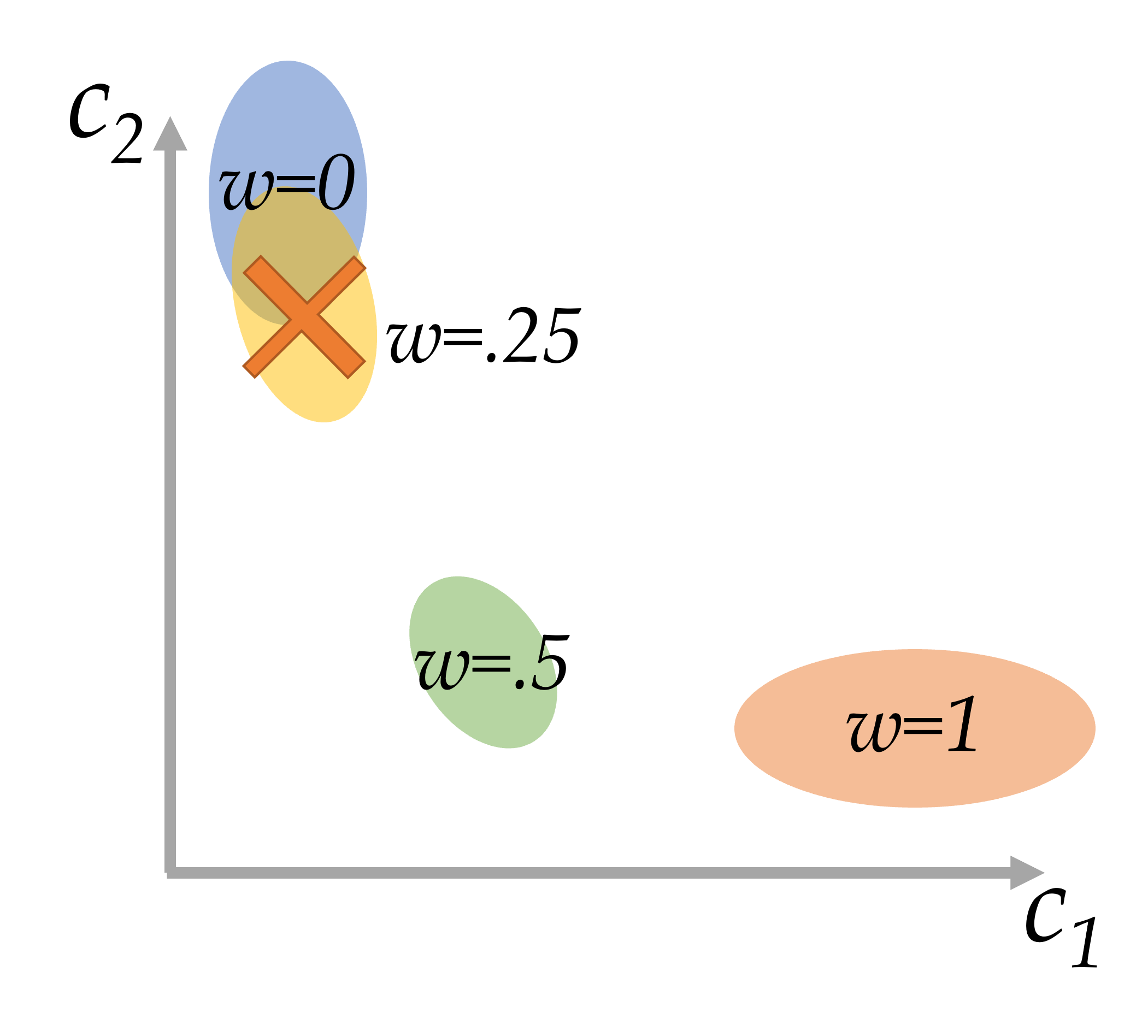}
    \caption{\footnotesize Iteration 2, reject $w=.25$.}
            \label{fig:alg_example3}
    \end{subfigure}\qquad
    \begin{subfigure}[t]{0.2\textwidth}
         \centering
        \includegraphics[width=0.95\linewidth]{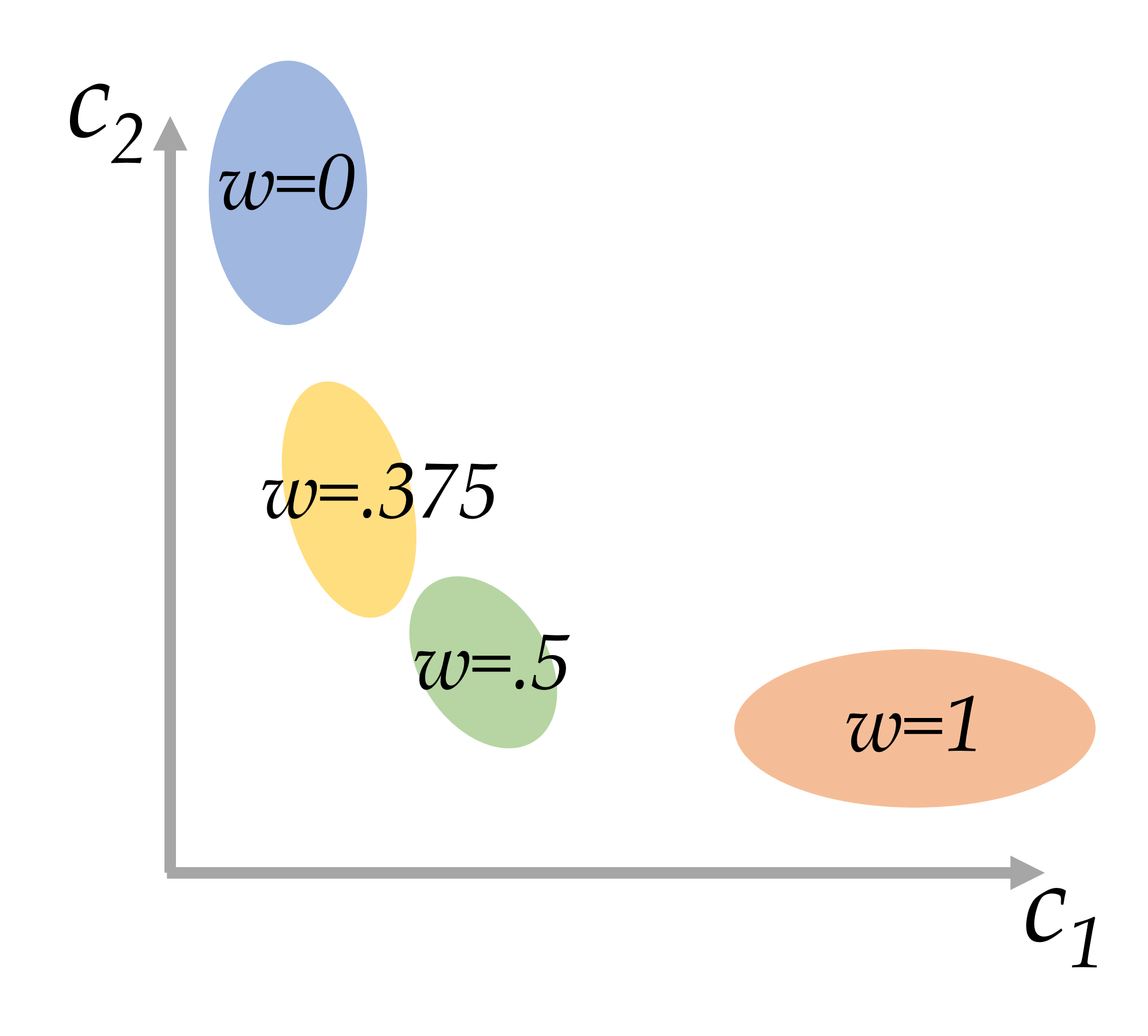}
    \caption{\footnotesize Iteration 3, add $w=.375$.}
        \label{fig:alg_example4}
    \end{subfigure}
    \caption{Illustration of Algorithm \ref{alg:Pareto_Explore}. Colored ellipses show the estimated cost distributions $\Cvec_{\I}(w)$ for iteratively sampled weights. }
    \label{fig:alg_example}
\end{figure}

\subsection{Theoretical Results}
\label{sec:theoretical}
In this section we establish several theoretical properties of of Algorithm \ref{alg:Pareto_Explore}.
We begin with characterizing the runtime. Let $t_{\pi}(m,n)$ be the runtime of the MRPD-solver $\pi$ for a problem with $m$ robots and $n$ tasks. Algorithm \ref{alg:Pareto_Explore} requires exactly $K$ calls of the MRPD solver for each of the $\eta$ training instances, yielding a runtime of $O(K \eta t_{\pi}(m,n))$. Thus, given a polynomial time approximation or heuristic for MRPD, the runtime remains in polynomial time.

Next, we show that the proposed algorithm satisfies all three conditions formulated in Problem \ref{prob:MOMRPD}.
Thus, let $\Pi=\{\pi^1, \dots, \pi^k\}$ be the policies that correspond to the weights $\Omega=\{\w^1, \dots, \w^k\}$ returned by Algorithm \ref{alg:Pareto_Explore}.

\bigskip
\paragraph{Optimal solutions}
The first condition is that the policies yield \textit{approximately} Pareto-optimal system behaviour. This strongly depends on the chosen MRPD policy. In Section \ref{sec:example_costs} we show the implementation of different MRPD cost functions to solve \eqref{eq:MRPD_LSMOOP} using a state-of-the-art algorithm \cite{alonso2017demand}, which finds optimal solutions given sufficient computational budget.

\bigskip
\paragraph{Minimal Dispersion}
Secondly, the solution $\Pi$ should minimize \textit{dispersion}.
To that end, we establish completeness of Algorithm \ref{alg:Pareto_Explore}, \textit{i.e.,} for a sufficiently large budget $K$ the solution found by the algorithm has the minimal attainable dispersion.
We begin with a supporting lemma.

\begin{lemma}[Piece-wise constant]
\label{lem:piececonstant}
The function $\vect{c}(\pi(\w), \T)$ is piece-wise constant over $\w$ for a fixed and finite sequence of tasks $\T$.
\end{lemma}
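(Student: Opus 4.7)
The plan is to exploit the combinatorial structure of MRPD: for any fixed and finite task sequence $\T$, the scalarized policy can only realize finitely many distinct outcomes, and the selection among them is driven by an objective that is linear in $\w$.

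First, I would argue that for a fixed realization of $\T=\{T_1,\dots,T_n\}$ and a fleet of $m$ robots operating on the finite graph $G$, the set of feasible outcomes (an assignment $\A$ together with a tour per robot that visits the relevant pickup and drop-off vertices in some order) is finite. Although an online policy may re-plan each time a task is released, at every replanning step it picks from finitely many admissible (non-redundant) assignments and tours. Hence, the compound outcome that $\pi(\w)$ executes always lies in a finite set $\{\mathcal{O}_1, \dots, \mathcal{O}_N\}$, enumerated independently of $\w$. Each $\mathcal{O}_i$ induces a deterministic cost vector $\cvect_i := \cvect(\mathcal{O}_i, \T) \in \mathbb{R}^n$ that depends only on $\T$ and $\mathcal{O}_i$, not on $\w$.

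Second, since $\pi(\w)$ solves the scalarized problem in \eqref{eq:MRPD_LSMOOP}, it must select an index $i^\star(\w)$ that minimizes $\w \cdot \cvect_i$ over $i = 1, \dots, N$. The pointwise minimum $\w \mapsto \min_i \w \cdot \cvect_i$ is the lower envelope of $N$ linear functions on the simplex $\W$, and its argmin partitions $\W$ into at most $N$ polyhedral cells $\mathcal{R}_1,\dots,\mathcal{R}_N$, each given by a finite system of linear inequalities. On the interior of each cell $\mathcal{R}_i$, a single outcome is strictly optimal, so $\cvect(\pi(\w), \T) = \cvect_i$ is constant throughout, which is precisely the claimed piecewise-constant behaviour.

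The main obstacle is the treatment of ties on the shared boundaries of these cells, where several outcomes achieve the same scalarized cost. I would resolve this by imposing a deterministic tie-breaking rule as part of the specification of $\pi$, for instance a fixed lexicographic ordering on $(\mathcal{O}_1,\dots,\mathcal{O}_N)$. Such a rule makes $\pi(\w)$ well-defined everywhere on $\W$ and assigns each boundary weight to exactly one cell, preserving piecewise constancy across the entire weight space and completing the proof.
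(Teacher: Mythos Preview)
Your proof is correct and shares the paper's core observation: the solution space is finite, hence the cost vector can take only finitely many values. The paper's own proof stops there in essentially one sentence: since a policy's output is a finite sequence of vertices per robot on a finite graph, the solution space is finite and so the cost is piece-wise constant. You go further by exhibiting the structure of the pieces --- the lower envelope of finitely many linear functionals $\w\mapsto \w\cdot\cvect_i$ partitions $\W$ into polyhedral cells --- and by explicitly handling tie-breaking on shared boundaries. This extra work makes the phrase ``piece-wise constant'' precise (finite range alone does not force a partition into finitely many connected pieces), and the polyhedral/convex structure you uncover is exactly what the paper later relies on in the proof of Theorem~\ref{thm:complete}, where it is asserted that each set $\W^i$ of weights yielding the same solution is convex. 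So your argument is strictly more informative, at the cost of being longer than the paper's one-line version.
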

\begin{proof}
The solution space for a policy $\pi(\w)$ is the set of finite sequences of vertices on the graph $G$ for each robot, which itself is finite in size. Thus, the cost must be piece-wise constant.
\end{proof}

Lemma \ref{lem:piececonstant} implies that, given a fixed MRPD policy $\pi(\w)$, there exists a finite sized set of weights $\Omega^*=\{\w^1, \w^2, \dots\}$ such that every $\vect{c}(\pi(\w), \T)$ is attained by a policy $\pi(\w)$ for exactly one element in $\Omega^*$. Let $\Pi^*$ be the corresponding set of policies, which by definition
achieves the smallest possible dispersion $D(\Pi^*)$.
Despite Lemma \ref{lem:piececonstant} there can exist solutions that are only attained for a singleton $\w$, making any sampling based method only asymptotically complete.
This motivates the following assumption.

\begin{assumption}[Non-singleton solution weights]
\label{as:volume}
Let $\Gamma^*$ be the set of optimal solutions to \eqref{eq:MRPD_LSMOOP}. We assume that each solution $\cvect^i$ in $\Gamma^*$ is optimal for some set of weights $\W^i\subseteq\W$, inscribing a ball in $\R^{n-1}$ of radius $r^i>0$.
\end{assumption}
Thus, each solution can be found by any weight $\w^i$ lying in $\W^i$.
The dimension $n-1$ of the ball comes from the weight space $\W$ being a subset of $\R^n$, constrained by one equality.
We can now establish the second key result for the proposed algorithm.

\begin{theorem}[Completeness] 
\label{thm:complete}
Under Assumption \ref{as:volume}, Algorithm \ref{alg:Pareto_Explore} is complete, \textit{i.e.,} finds all solutions $\Gamma^*$ given a sufficiently large but finite budget $K$.
\end{theorem}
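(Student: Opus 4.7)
The plan is to establish completeness by a geometric shrinking argument on the simplex subdivision scheme, combined with the ball-inscription hypothesis of Assumption \ref{as:volume}. I would first observe that each iteration of the main loop of Algorithm \ref{alg:Pareto_Explore} selects some edge of a simplex $s\in \Sset$ and bisects it via Algorithm \ref{alg:update_segments}, producing strictly smaller simplexes. Crucially, this bisection happens on \emph{every} iteration, independent of whether $\w'$ passes the hypothesis test and is inserted into $\Omega$. Hence over $K-n$ iterations we perform $K-n$ splits, and the simplex complex becomes progressively finer.

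The central step is a shrinking lemma: for any $\epsilon>0$, after finitely many iterations, every simplex in $\Sset$ that intersects a not-yet-discovered region $\W^i$ has diameter less than $\epsilon$. I would prove this by contradiction, assuming some simplex $s^*$ persists with large diameter. By Lemma \ref{lem:piececonstant} the cost vector $\cvect(\pi(\w),\T)$ is piecewise constant in $\w$, so if $s^*$ meets a boundary of $\W^i$ then $s^*$ must contain at least one edge with positive pair-wise dispersion $\|\vmu^i-\vmu^j\|>0$. The main obstacle here is ruling out that this edge gets perpetually ignored in favour of \emph{other} competing positive-dispersion edges: this is precisely where the discount factor $\alpha$ in \eqref{eq:disc_disp} is used. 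Each time a midpoint elsewhere returns a cost already attained (a flat region), $\alpha$ on all edges sharing the same mean-cost pair increments, halving their attractiveness. Since $|\Gamma^*|$ is finite (a consequence of Lemma \ref{lem:piececonstant} together with Assumption \ref{as:volume}), only finitely many distinct mean-cost pairs exist, so the cumulative discount applied to ``flat'' edges grows unboundedly while the edge touching $\W^i$ retains its unweighted dispersion, guaranteeing it is eventually selected.

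Once the shrinking lemma is in hand, the finish is short. By Assumption \ref{as:volume}, for each $\cvect^i\in\Gamma^*$ there is a ball $B(\w_*^i,r^i)\subseteq \W^i$ with $r^i>0$. Apply the shrinking lemma with $\epsilon=r_{\min}/2$ where $r_{\min}=\min_i r^i$. Then every simplex containing $\w_*^i$ is entirely inside $B(\w_*^i,r^i)\subseteq\W^i$; the next bisection of such a simplex picks a midpoint $\w'$ which therefore lies in $\W^i$ and whose mean cost equals $\vmu^i$. If $\cvect^i$ has not yet been recovered, $\w'$ is statistically distinguishable from every element of $\Omega$, passes the test in line 7 of Algorithm \ref{alg:update_segments}, and is inserted. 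Iterating over the finite list $\Gamma^*$, some finite $K$ suffices to recover every optimal solution, establishing completeness.

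The technical heart of the argument, and what I expect to be the hardest part to make rigorous, is the combined bookkeeping on the discount factor and the simplex diameter: one has to argue that the algorithm cannot indefinitely ``waste'' iterations bisecting flat edges while some $\W^i$ goes undiscovered, and that the discount factor makes this impossible over any sufficiently long horizon. A cleaner way to organize this might be a potential-function argument, tracking the sum over all simplex edges of their discounted dispersion plus a diameter term, and showing this potential strictly decreases along the trajectory of the algorithm whenever $\Gamma^*\setminus\Gamma$ is nonempty.
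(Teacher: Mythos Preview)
Your overall architecture—use the discount factor to guarantee every edge with distinct endpoint costs is eventually bisected, then invoke Assumption~\ref{as:volume} to conclude some sample lands in each $\W^i$—is the same strategy the paper follows. The paper organizes it as two claims (every edge with $\vmu^i\neq\vmu^j$ is expanded in finitely many steps; a simplex is abandoned only when it is entirely inside one $\W^i$, using convexity of the $\W^i$), whereas you phrase it as a diameter-shrinking lemma. These are essentially equivalent, and your bookkeeping on the discount factor is close to the paper's explicit bound of $(N-1)\log_2(D^{lp}/\delta)$ iterations.

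There is, however, a genuine gap. You never neutralize the hypothesis test. The paper's proof begins by \emph{setting $\Delta=1$}, so that $H_{\Delta}(\w^i,\w^j)=1$ always holds and line~7 of Algorithm~\ref{alg:update_segments} never rejects a weight. You instead assert that ``if $\cvect^i$ has not yet been recovered, $\w'$ is statistically distinguishable from every element of $\Omega$, passes the test in line~7.'' That is not justified: two distinct Pareto solutions can have cost distributions $\Cvec_{\I}$ whose KL divergence is small enough that $h(\w',\w)>\Delta$, in which case the new weight is rejected and the solution is never recorded. Worse, the factor $H_{\Delta}$ also appears in the discounted dispersion \eqref{eq:disc_disp}: for $\Delta<1$, an edge whose endpoints yield distinct but statistically close solutions has $D((\w^i,\w^j))=0$ and is \emph{never} selected by $\mathtt{Find\_New\_Weight}$, so your shrinking lemma fails for exactly the simplexes straddling such a pair of regions. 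The fix is the paper's: prove completeness for $\Delta=1$, where both obstructions vanish.

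A smaller point: your final step says that once a simplex containing $\w_*^i$ has diameter below $r_{\min}/2$, ``the next bisection of such a simplex'' lands in $\W^i$. But if the simplex is already entirely inside $\W^i$, all its vertices share the same mean cost, every edge has pair-wise dispersion zero, and it will never be bisected again. The conclusion you want is rather that one of its \emph{vertices}—already sampled—lies in $\W^i$, so $\vmu^i$ was recorded at the moment that vertex was created. This is closer to how the paper closes the argument, and it is also why the paper needs $\Delta=1$ at that step too.
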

\begin{proof}
To prove the theorem, we first establish two claims: 
\begin{claim}
\label{claim_expand}
$\mathtt{Find\_New\_Weight}$ will expand any edge $(\w^i, \w^j)$ where $\vmu^i\neq \vmu^j$ within finite iterations.
\end{claim}

\begin{claim}
\label{claim_stop}
Any simplex $s$ is not split further (\textit{i.e.,} disregarded) only if all solutions corresponding to a weight $\w$ in $s$ have already been sampled. 
\end{claim}

Let the input $\Delta$ of Algorithm \ref{alg:Pareto_Explore} be $1$ such that $H_{\Delta}(\w^i,\w^j)=1$ always holds and the H-test never prevents a new weight from being added to $\Omega$.
As a shorthand let $D^{ij}$ denote the pair-wise dispersion $D((\w^i,\w^j))$.

\begin{subproof}[Subproof of Claim \ref{claim_expand}]
To show the first claim, consider some simplex $S$ with an edge $(\w^i, \w^j)$. Since $\vmu^i\neq \vmu^j$, we have $D^{ij}\geq \delta>0$. We now prove that this edge will eventually be expanded. 
At some iteration $k$ let $D^{lp}$ be the current maximimum \textit{discounted pair-wise dispersion}, attained for edge $(\w^l, \w^p)$ in simplex $s'$. Thus, $\mathtt{Find\_New\_Weight}$ will return the weight $\w^q$ that is the midpoint of $(\w^l, \w^p)$. 
This leads to two cases: 1) The solution for $\w^q$ is a new solution, 2) the solution is identical to either the solution corresponding to $\w^l$ or $\w^p$.
In the first case, the algorithm explores a new element of the expected Pareto-front. Since the set of all solutions is finite, this can only happen for a finite number of iterations.
However, no immediate progress is made in the second case when $\vmu^q$ is not a new solution.
Without loss of generality, we order $\w^l$ and $\w^p$ such that $\vmu^q=\vmu^p$ holds.
We now show that edges of the new simplexes created by 
Algorithm \ref{alg:update_segments} have a smaller discounted pair-wise dispersion than the old edge $(\w^l,\w^p)$.
The new edges are
$(\w^l,\w^q)$, $(\w^q,\w^p)$, and $(\w^q,\w^r)$ for any $\w^r$ in $s'$ where $\w^r\neq \w^l$ and $\w^r\neq \w^p$.
First, $D^{qp}=0$ trivially holds since $\vmu^q=\vmu^p$. Further, the discount factor for the new edge $(\w^l,\w^q)$ equals $\alpha(\w^l,\w^p) + 1$. Hence, $D^{lq} = \nicefrac{1}{2}D^{lp}$. Lastly, there are $n-4$ other edges $(\w^q,\w^r)$. However, since $\vmu^q=\vmu^p$, the discounted pair-wise dispersion $D^{qr}$ is equal to the existing edge $(\w^p,\w^r)$. Finally, any future iteration returning the midpoint of either $(\w^p,\w^r)$ or $(\w^q,\w^r)$ will increment the discount factor of both edges, and thus update their discounted pair-wise dispersion.

Thus, each iteration removes a current maximizer of the discounted pair-wise dispersion, and introduces a) two new edges with $D^{qp}=0$ and $D^{lq} = \nicefrac{1}{2}D^{lp}$, b) $n-4$ new edges that are \textit{coupled} to existing edges, \textit{i.e.,} share the same discount factor $\alpha$. Hence, letting $N$ be the total number of edges among all simplexes, it takes at most 
$(N-1)\log_2 \nicefrac{D^{lp}}{\delta}$ iterations until the edge $(\w^i, \w^j)$ becomes the maximizer and its midpoint is returned by $\mathtt{Find\_New\_Weight}$.
\end{subproof}

\begin{subproof}[Subproof of Claim \ref{claim_stop}]
The second claim ensures that no solution is missed by not expanding a simplex further. Following the first claim, a simplex $s$ is not expanded only if $D^{ij}=0$ for all edges $(\w^i, \w^j)$ of $s$. 
Since we set $\Delta=1$ it must always hold that $H_{\Delta}(\w^i,\w^j)=1$. Hence, we have $D^{ij}=0$ if and only if $\vmu^i= \vmu^j$, following the definition in equation \eqref{eq:disc_disp}.
If this holds for all edges in $s$, all vertices of the simplex must have equal solutions $\Cvec_{\mathcal{I}}$. Since the set of weights $\W^i$ yielding the same solution is convex \cite{wilde2019bayesian}, all weights in the interior of $s$ must also correspond to the same solution $\Cvec_{\mathcal{I}}$. Thus, expanding $s$ further cannot yield any solution that has not been sampled yet.
\end{subproof}

In conclusion, Claim \ref{claim_expand} ensures that every edge with non-equal solutions is expanded within a finite number of iterations and Claim \ref{claim_stop} guarantees we only disregard parts of the weight space when they cannot lead to finding a new solution.
Lastly, we need to establish that it is sufficient to only add new samples on the midpoint of edges. We notice that by adding new weights as the midpoint of some edge in simplex $s$, the newly created simplexes $s^i$ and $s^j$ have edges that pass through the circumcenter of $s$. By Assumption \ref{as:volume}, there will eventually be an edge $(\w^i,\w^j)$ passing through each set $\W^i$ corresponding to a solution. Thus, by searching over midpoints of edges, a sample will be placed in each set $\W^i$ after finite iterations, concluding the proof.
\end{proof}

To summarize, Theorem \ref{thm:complete} ensures that Algorithm \ref{alg:Pareto_Explore} achieves minimum dispersion for a sufficiently large but finite $K$.  

\begin{remark*}[Necessity of discount factor]
An intuitive simplification of the proposed method could be to only consider the edge with maximum pair-wise dispersion without a discount factor. However, such an approach is not complete, as we will illustrate in a simple counterexample.
Consider an instance of Problem \ref{prob:MOMRPD} with only one task and a single robot located at the task's pickup location. For three given cost functions let there be only four unique solutions on how the robot can move from the pickup location to the goal, attaining the following mean cost vectors: $\vmu^1 = [0\ 5\ 2]$, $\vmu^2 = [5\ 0\ 2]$, $\vmu^3 = [10\ 10\ 1]$, and $\vmu^4 = [6\ 6\ 1.1]$ (taking the mean here is trivial since the task arrival is fixed to be deterministic). We notice that all four vectors are Pareto-optimal since they are not dominated by any other vector.
Algorithm \ref{alg:Pareto_Explore} begins with the basis weights $\vect{w}^1=[1\ 0\ 0]$, $\vect{w}^2=[0\ 1\ 0]$, $\vect{w}^3=[0\ 0\ 1]$ and computes the corresponding solutions, which in this case will be $\vmu^1$,$\vmu^2$ and $\vmu^3$, respectively.
Among these three solutions, the maximum pairwise dispersion is found between $\w^1$ and $\w^2$.
However, observe that the optimal solution for any convex combination $w'=[\lambda\ 1-\lambda \ 0]$ of $\w^1$ and $\w^2$ is either $\vmu^1$ or $\vmu^2$. 
Hence, placing a sample on the edge between these two weights does not discover a new solution. Thus, without the discount factor, the algorithm will perpetually place new weights on the line between $\w^1$ and $\w^2$ without ever finding the fourth solution $\vmu^4$ and therefore does not converge.
\end{remark*}

\paragraph{Statistically different solutions}
Lastly, we consider the third property that cost distributions of different policies are statistically distinguishable. 
\begin{lemma}[Distinguishable solutions]
    The policies $\Pi=\{\pi^1, \dots, \pi^k\}$ have statistically significantly different cost distributions $\cvect(\pi,\T)$.
\end{lemma}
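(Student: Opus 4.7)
The plan is to show that this property follows essentially by construction, tracing the admission criterion built into Algorithm \ref{alg:update_segments}. The key ingredient is equation \eqref{eq:hyp_error}: the quantity $h(\w^i,\w^j)=e^{-\KL(\w^i\|\w^j)}$ is, by the Chernov–Stein Lemma, an upper bound on the probability of error of a hypothesis test that attempts to distinguish the sampled distributions $\cvect(\pi(\w^i),\T)$ and $\cvect(\pi(\w^j),\T)$. Thus the condition $h(\w^i,\w^j)\leq\Delta$ is precisely a certificate of statistical distinguishability at level $\Delta$.

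First I would establish the invariant by induction over the iterations of Algorithm \ref{alg:Pareto_Explore}. After initialization (lines 2–3) the set $\Omega$ contains the basis weights $\vect{e}^1,\dots,\vect{e}^n$, which one can assume to yield pairwise distinguishable distributions (otherwise one of the single-objective problems is redundant and the MO-MRPD instance reduces to a smaller one; in that degenerate case one simply drops the redundant basis vector from $\Omega$ before applying the argument). For the inductive step, at iteration $k$ the function $\mathtt{Find\_New\_Weight}$ proposes a candidate midpoint $\w'$; Algorithm \ref{alg:update_segments} then admits $\w'$ into $\Omega$ only when the guard in lines 7–9 passes, namely $h(\w',\w)\leq\Delta$ for every $\w$ already present in $\Omega$. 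Hence the invariant ``$h(\w^i,\w^j)\leq\Delta$ for all $\w^i,\w^j\in\Omega$'' is preserved after each iteration.

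To finish, I would translate the invariant back into the language of the statement. For any two policies $\pi^i,\pi^j\in\Pi$ corresponding to weights $\w^i,\w^j\in\Omega$, the Chernov–Stein bound gives that the probability of confusing the cost distributions $\cvect(\pi^i,\T)$ and $\cvect(\pi^j,\T)$ under a hypothesis test based on the samples $\Cvec_\I(\w^i),\Cvec_\I(\w^j)$ is at most $\Delta$. This is precisely the notion of ``statistically distinct'' adopted after Definition \ref{def:dispersion}, so the cost distributions of the policies in $\Pi$ are pairwise statistically distinguishable at confidence level $1-\Delta$.

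The proof is not expected to contain any genuinely hard step, since the mechanism that enforces the property is hard-wired into the admission test of $\mathtt{Update}$; the only subtlety is the bookkeeping around the unconditionally inserted basis weights in line 2, which I would dispatch with the redundancy observation above. The remainder is merely matching the algorithmic guard against the formal definition via \eqref{eq:hyp_error}.
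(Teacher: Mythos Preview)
Your proof is correct and follows essentially the same route as the paper: both argue by construction from the admission test in Algorithm~\ref{alg:update_segments}, which only adds $\w'$ to $\Omega$ when $h(\w',\w)\leq\Delta$ for all existing $\w\in\Omega$. Your treatment is in fact more careful than the paper's---you explicitly address the unconditionally inserted basis weights via the redundancy observation, whereas the paper's three-sentence proof glosses over this and instead adds the caveat that $\eta$ must be large enough for the sampled costs $\Cvec_\I(\w)$ to approximate the true distributions $\cvect(\pi(\w),\T)$.
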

\begin{proof}
    Algorithm \ref{alg:update_segments}, ensure that only weights $\w'$ are added to the solution set when the sampled costs $\Cvec_{\I}(\w')$ passes an H-test against all already sampled solutions $\w$ with threshold $\Delta$. For a sufficiently large $\eta$, the sampled cost vectors $\Cvec_{\I}(\w)$ approximate the distributions $\cvect(\pi(\w),\T)$ for all $\w$. Thus, the choice of $\Delta$ provides an upper bound on how likely it is that two solutions are not statistically significantly different.
\end{proof}

In conclusion, we have shown that Algorithm \ref{alg:Pareto_Explore} satisfies all three requirements posed in Problem \ref{prob:MOMRPD}. Next, we will show example configurations for different MRPD objectives.

\section{Exemplary MO-MRPD configuration}
\label{sec:example_costs}
In this section, we show an approach to solving the scalarized MO-MRPD, \textit{i.e.,} the weighted cost function in \eqref{eq:MRPD_LSMOOP} for three objectives.

\subsection{Cost functions}
We consider three MRPD objectives: i) the quality of service (QoS), capturing how timely deliveries are, ii) a social cost for traversing human-centered spaces similar to \cite{wilde2020improving}, and iii) the total travel distance, representing the overall robot traffic and energy consumption.

The quality of service $c^Q(\pi, \T)$ measures the time between each task $T\in\T$ being announced and being completed - also called the \textit{system time} of all tasks. Let $\tau$ be the tour a robot takes following policy $\pi$. We defined $t^f(T, \tau(\pi))$ as the time tour $\tau$ visits the destination vertex of task $T$, and $t^r(T)$ as the release time of the task. 
Given a task's deadline $t^d(T)$, and some large constant $M$, the QoS of a task is then given by
\be
q(T, \tau(\pi))
=
\begin{cases}
    t^f(T, \tau(\pi)) - t^r(T) \text{ if }t^f(T, \tau(\pi))\leq t^d(T),\\
    M\text{ otherwise.}
\end{cases}
\ee

The QoS for all tasks then is
\be
\label{eq:cost_QoS}
c^Q(\pi, \T) = \sum_{T\in \T}q(T, \tau(\pi)).
\ee
The second cost is the social cost, capturing the intrusiveness of a robot tour into human-centered parts of the environment. To this end a subset $E'$ of the edges on the graph are labelled as \textit{'avoid robot traffic'}, which we encapsulate in a binary indicator function $\phi(e)$.
The social cost $c^S(\tau)$ then counts how many such labelled edges are visited by a tour, \ie appear in the tours edge sequence $E(\tau)$:
\be
\label{eq:cost_route}
c^S(\tau) = \sum_{e\in E(\tau)} \phi(e).
\ee
Lastly, the total distance $c^T(\tau)$ is simply the length of a robot's tour $\tau$, defined as the sum of all edge lengths:
\be
\label{eq:cost_length}
c^T(\tau) = \sum_{e\in E(\tau)} d(e).
\ee

\subsection{Solving linearly scalarized MO-MRPD for fixed weights}
We now specify a policy for solving \eqref{eq:MRPD_LSMOOP} given fixed weights $\w$. We begin by showing how we compute tours for a single robot and its assigned tasks, before describing the task assignment procedure.
\paragraph{Routing Problem}
Given a set of tasks $\T$ that are assigned to a robot, we need to find a tour starting at the robot's current location $v$ that visits all pickup and dropoff locations $s_j$ and $g_j$ for all $T_j\in \T$, and minimizes 
\be
\label{eq:example_cost}
w_1c^Q(\T,\tau) + w_2c^S(\tau)+w_3c^T(\tau),
\ee
subject to ordering and capacity constraints.

The complexity of ordering and capacity constraints prevents us from using a TSP-approximation algorithm. 
Instead, we construct a new graph $G^{\w}=(V, E, d')$ with the same vertices and edges as the given graph $G$. However, edge costs are defined as $d'(e)=w_1d(e) + w_2\phi(e) + w_3d(e)$ \textit{i.e.,} capture the different costs.
{We notice that when only travelling from some start location to a drop-off location, optimizing QoS and total time is equivalent. However, a minimum-cost tour on $G^{\w}$ does not necessarily minimize \eqref{eq:example_cost}.}  
We compute tours using a two-step heuristic:
First, we find an initial tour using a min-cost insertion approach, with respect to the cost in \eqref{eq:example_cost}. Paths connecting the robot's location and the pickup and drop-off locations are then shortest paths on graph $G^{\w}$, {yet the ordering of locations is picked such that \eqref{eq:example_cost} is minimized to a local optimum.}. Afterwards, we improve the tour using a large neighbourhood search (LNS) \cite{smith2017glns} with random deletion and insertion.
\paragraph{Assignment Problem}

Let $Q$ be a set of newly arrived tasks. The assignment problem decides which robot services which task in $Q$.
In general, our framework is agnostic to the assignment algorithm. A popular state-of-the-art method is the group assignment algorithm from \cite{alonso2017demand}, which we employ in the simulation experiments.
Unlike greedy methods, this framework actively combines different tasks. After collecting newly arrived new tasks for a fixed time period, the algorithm forms groups of tasks that could be serviced  by a single robot while satisfying all deadlines are grouped together. Then, a disjoint subset of these groups is assigned to the robots using an Integer Linear Program (ILP).

\paragraph{Optimality considerations}
{
The presented routing and assignment approach is a heuristic that finds locally optimal solutions. Yet, the approach can easily be modified to be optimal for each individual time instance by i) computing tours using exhaustive search over all possible orderings of pickup and drop-off locations, and ii) running the group assignment with groups sizes up to the number of currently outstanding tasks. Then, we would compute Pareto-optimal solutions at the current time of planning for newly arrived tasks.
However, the practical benefits are limited due to the poor scalability of exhaustive search. Moreover, the heuristic implementation of the group assignment framework was shown to be effective in finding high quality solutions for large scale problems \cite{alonso2017demand}.
}

\section{Evaluation}
We evaluate our proposed method for finding a set of MO-MRPD policies in a set of simulation experiments.
We consider instances of MO-MRPD with two and three objectives, namely the QoS, social cost and total length cost described in Section \ref{sec:example_costs}.

\subsection{Experiment Setup}
\paragraph{Environment and MRPD settings}
We consider two different environments: a real-world office floorplan as well as an artificial map with a central lobby where robot traffic is undesired, both are shown in Figure \ref{fig:env_examples}.
In both maps, the task sequences are generated using a Poisson process, while pickup and dropoff locations are sampled uniformly random from a set of predetermined locations. Task deadlines are manually chosen such that all tasks can be serviced on time when only optimizing for QoS, but deadlines might be missed when considering the other two costs.
Throughout the experiment, we vary the MRPD system to feature between $2$ and $8$ robots with each a capacity of $\kappa=4$ for servicing $100$ to $200$ tasks.
\begin{figure}[t]
    \centering
    \begin{subfigure}[t]{0.49\textwidth}
            \centering
             \includegraphics[width=0.95\linewidth]{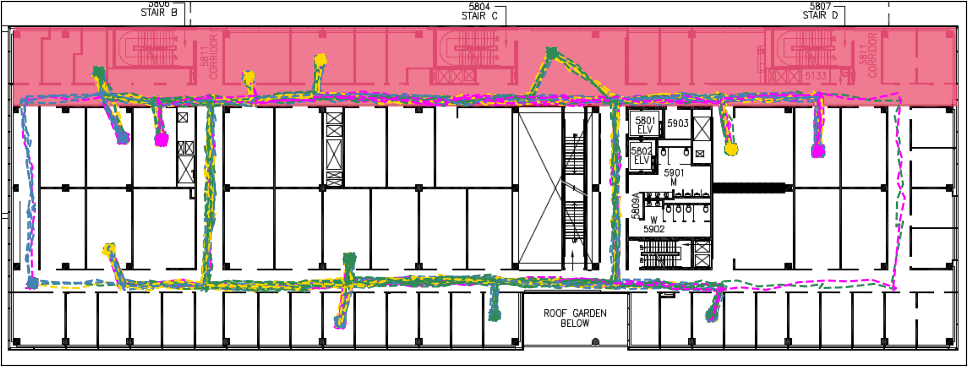}
    \caption{Office environment.}
    \end{subfigure}\qquad
    \begin{subfigure}[t]{0.3\textwidth}
         \centering
        \includegraphics[width=0.75\linewidth]{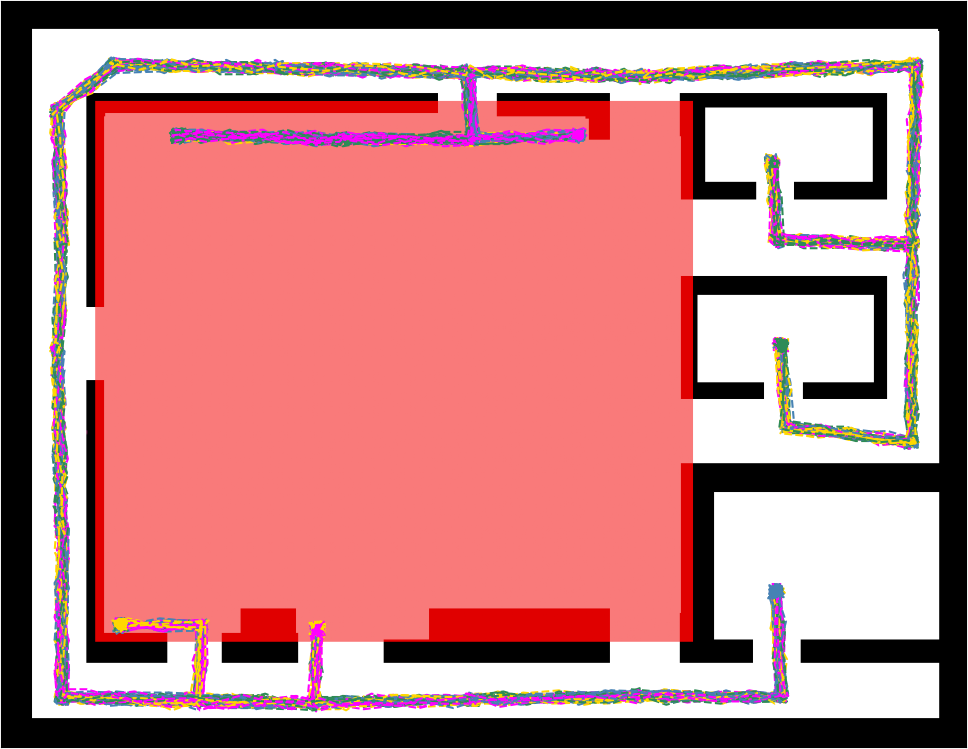}
    \caption{Lobby environment.}
    \end{subfigure}

    \caption{Simulation environments for the experiments. Free space is shown in white, static obstacles in black. The red area indicates parts of the environment where robot traffic is undesired (\textit{the upper floor and the main lobby, respectively}). {The dashed lines show an robot routes for an example MO-MRPD solution: in (a) QoS is of high priority, such that there is a lot of robot traffic in the social space while in (b) the robots avoid the lobby as much as possible.}.}
    \label{fig:env_examples}
\end{figure}

\paragraph{Algorithm settings}
Algorithm \ref{alg:Pareto_Explore} uses varying computation budgets $K$, the threshold for the h-test is set to $\Delta=.1$, and the number of instances is $\eta=20$.

\paragraph{Baseline Comparison}
We compare the proposed adaptive sampling method (Adaptive Sampling - $\AS$) against several baselines. 
We consider two variants of finding policies by placing scalarization weights \textit{uniformly}.
The first places $K$ weights uniformly ($\Uni$), where $K$ is the budget given to $\AS$. Since our Algorithm has an early-stopping mechanism such that no samples are added when they fail the h-test it might use fewer than $K$ samples. 
Thus, we additionally compare with uniform sampling ($\Unib$) placing as many samples as were returned by $\AS$.

The third baseline, proposed in \cite{bhonsale2022ellipsoid,mores2023multi}, is closely related to our work. Using a divide-and-conquer approach it divides the set of weights into regular simplexes and places samples on its vertices. We denote this algorithm as $\baseline$. Similar to our approach, they stop dividing a simplex further when the associated solutions are statistically too similar, evaluated by the overlap of confidence ellipsoids. We slightly modify their approach to use our H-test as the stopping criterion for a better comparison. A key difference is that their method does not use a greedy objective to select the next simplex to explore and keeps exploring until every branch of computation reaches the stopping criterion. We limit the number of computations for $\baseline$ to the same budget $K$ given to $\AS$, and let it explore branches in a breath-first-search manner. 

\paragraph{Separation of training and testing instances}
The proposed algorithm as well as the $\baseline$ baseline use several problem instances to estimate the statistical variance of the objectives. In Algorithm \ref{alg:Pareto_Explore} this is denoted by the parameter $\eta$.
For the main simulation results we use $\eta=20$ random instances to run the algorithms. For evaluation we use $\eta^{\mathtt{test}}=20$ different randomly generated instances.
At the end of the evaluation section, we show that our method and evaluation are robust towards changes in $\eta$ and $\eta^{\mathtt{test}}$.

\paragraph{Evaluation measures}
We now introduce our performance measure to evaluate how well a set of sampled policies $\Omega=\{\w^1,\dots,\w^k\}$ with corresponding cost distributions $\Cvec_{\I}(\w^1),\dots,\Cvec_{\I}(\w^k)$ solves Problem \ref{prob:MOMRPD}. 
First, we normalize the cost vectors. Given the means $\vmu^1,\dots, \vmu^k$, let ${\mu}_i^{\min}$ and
${\mu}_i^{\max}$ be the minimum and maximum mean values for cost function $i$.
We then normalize each cost $c_i^j$ for all $j=1,\dots, k$ using the minimum and maximum means, \textit{i.e.,}
\be
\label{eq:normalize}
\bar c^j_i = \frac{( c^j_i-{\mu}_i^{\min})}{( {\mu}_i^{\max}-{\mu}_i^{\min})}.
\ee
Given these normalized cost, we denote the their distributions as $\bar\Cvec_{\I}(\w^1),\dots,\bar\Cvec_{\I}(\w^k)$ with means $\bar \vmu^1,\dots, \bar\vmu^k$.
Algorithm performance is then captured by four measures:
\begin{enumerate}
    \item $\mathtt{Hypothesis\_Error}$. The hypothesis error characterizes how \emph{statistically distinguishable} the behaviour produced by the different policies is. Thus, we let $h(\w^i, \w^j)$ be the probability of failing a type-1 hypothesis error between distributions $\bar\Cvec_{\I}(\w^i)$ and $\bar\Cvec_{\I}(\w^j)$.

    \item $\mathtt{Dispersion}$. The dispersion captures how well the expected Pareto-front is sampled, \textit{i.e.,} the size of gaps between sampled points on the Pareto-front. Implementing Definition \ref{def:dispersion} is impractical since the set of Pareto-optimal solutions is not available. Thus, we consider the following approximation: Given a set of policies and their mean cost vectors $\bar\vmu^1,\dots, \bar\vmu^k$, the \textit{approximated dispersion} is the radius of the largest ball such that a) the center $\vect{p}$ of the ball is located on a line connecting two points $\bar\vmu^i$ and $ \bar\vmu^j$, b) $\vect{p}$ is not \textit{dominated} by any other point $\bar\vmu^q$, and c) the ball does not contain any mean cost vector $\bar\vmu^q$ for all $i,j,q=1,\dots,k$.

    \item $\mathtt{Variance}$. Variance captures how homogenous the mean cost vectors $\bar\vmu^1,\dots, \bar\vmu^k$ are placed, \textit{i.e.,} how \textit{uniformly} we cover the expected Pareto-front.
    We approximate this measure by computing a minimum spanning tree (MST) where $\bar\vmu^1,\dots, \bar\vmu^k$ correspond to vertices and edge lengths to the euclidean pair-wise distances. The variance is then the variance of the edge lengths in the MST.
    
    \item $\mathtt{Coverage}$. Coverage \cite{zitzler1999multiobjective} captures how dominant the computed solutions are. Thus, we computes the volume of the subset $[0,1]^n$ that is \emph{not} dominated by the vectors $\bar\vmu^1,\dots,\bar\vmu^k$. We approximate the measure by sampling points in $[0,1]^n$ and checking if they are dominated by any $\bar\vmu^i$ for $i=1,\dots,k$.
  
\end{enumerate}
For all measures, a smaller value indicates better performance.
To ensure meaningful comparison between different experiment setups, we need to normalize the coverage measure. The best achievable coverage can vary greatly between problem setups since the expected Pareto-front may take different shapes, despite the normalized cost vectors.
Hence, we use the variance achieved by $\Uni$ as a normalizing constant for each experiment.
The other measures do not require normalization: The hypothesis error is an absolute measure. Dispersion and variance are comparable given the normalization of cost vectors in equation \eqref{eq:normalize}.

\subsection{Qualitative Analysis}

\begin{figure*}[t]
    \centering
    \begin{subfigure}[t]{0.32\textwidth}
            \centering
             \includegraphics[width=0.99\linewidth]{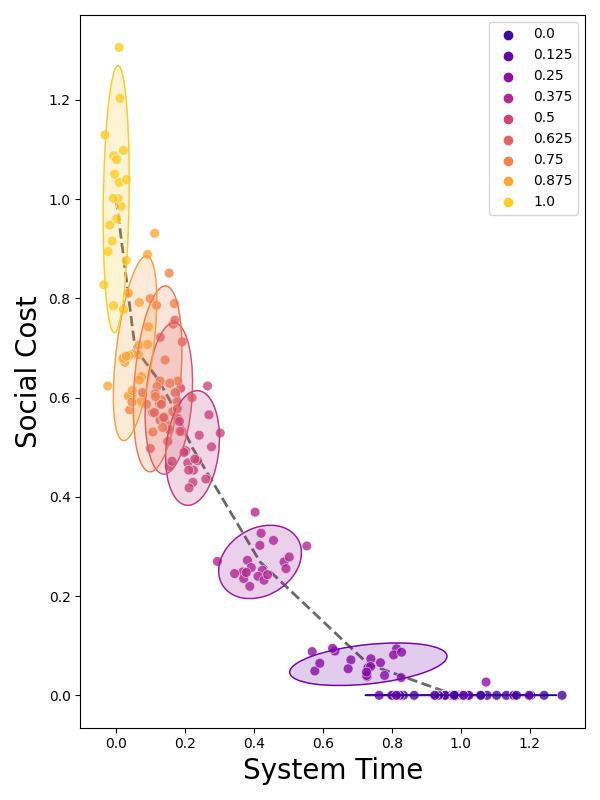}
    \caption{Uniform sampling ($\Unib$)}
            \label{fig:office_uniform}
    \end{subfigure}\hfill
    \begin{subfigure}[t]{0.32\textwidth}
         \centering
        \includegraphics[width=0.99\linewidth]{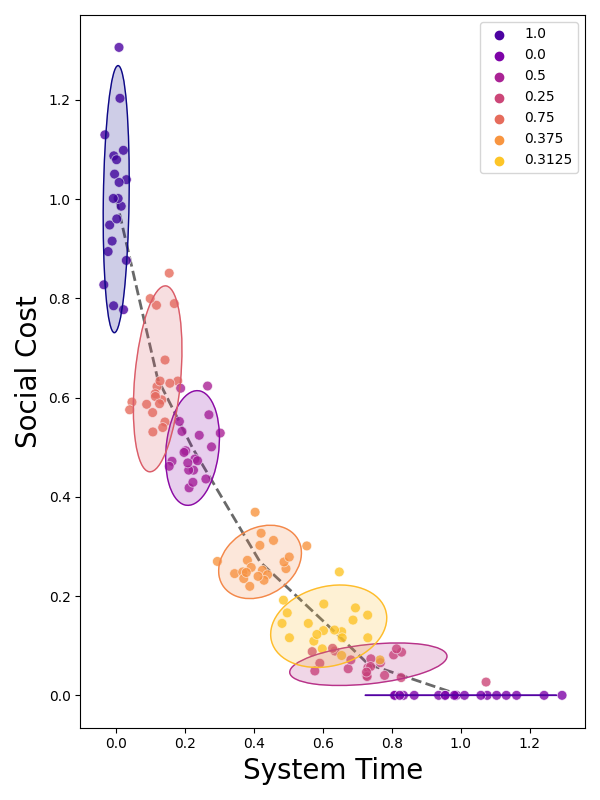}
    \caption{Baseline ($\baseline$)}
        \label{fig:office_baseline}
    \end{subfigure}
    \hfill
    \begin{subfigure}[t]{0.32\textwidth}
         \centering
        \includegraphics[width=0.99\linewidth]{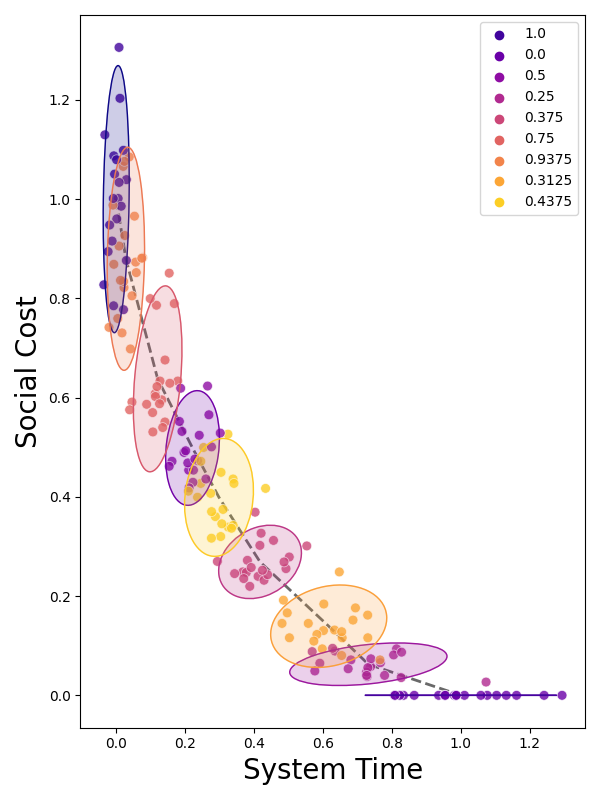}
    \caption{Proposed Adaptive Sampling ($\AS$).}
        \label{fig:office_PAAB}
    \end{subfigure}
    \caption{Example of Pareto approximations with budget $k=10$ for $4$ robots, $100$ tasks, operating in the lobby environment for a time horizon of $t=3000$.
    Each point cloud shows a cost vector $\cvect(\w, \T)$ for a policy $\pi(\w)$, ellipses illustrate 2 standard deviations of the sampling distribution $\Cvec_{\mathtt{I}}(\w)$. {The grey line interpolating cost means shows an approximation of the expected Pareto front}. The ordering in the legends corresponds to the order in which samples were placed.}
    \label{fig:Eval_Example}
\end{figure*}

\begin{figure*}[t]
    \centering
    \begin{subfigure}[t]{0.32\textwidth}
         \centering
        \includegraphics[width=0.99\linewidth]{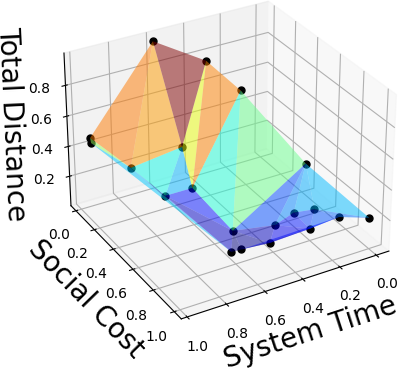}
    \caption{Uniform sampling ($\Unib$)}
        \label{fig:lobby3D_uni}
    \end{subfigure}
    \hfill
    \begin{subfigure}[t]{0.32\textwidth}
         \centering
        \includegraphics[width=0.99\linewidth]{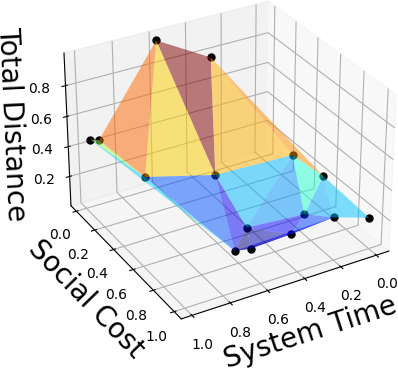}
    \caption{Baseline ($\baseline$).}
        \label{fig:lobby3D_uni2}
    \end{subfigure}
    \hfill
    \begin{subfigure}[t]{0.32\textwidth}
         \centering
        \includegraphics[width=0.99\linewidth]{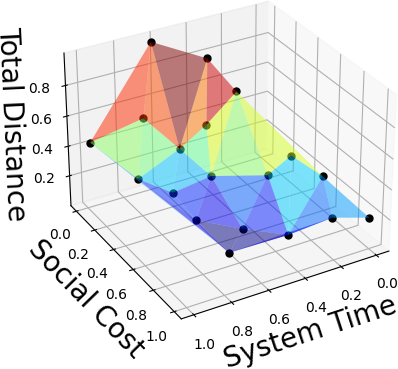}
    \caption{Proposed Adaptive Sampling ($\AS$).}
        \label{fig:lobby3D_proposed}
    \end{subfigure}
    \caption{Example of Pareto approximations with budget $k=30$ for $8$ robots, $200$ tasks, operating in the office environment for a time horizon of $t=3000$. Points show the mean cost vectors $\vmu(\w)$ and the 3D surface the Delaunay triangulation between vectors. Surface colors only support the 3D illustration.}
    \label{fig:example_results3D}
\end{figure*}

\paragraph{Two Objectives}
First, we consider an example experiment in the office environment with two objectives (QoS and social cost). Figure \ref{fig:Eval_Example} shows an exemplary comparison of the policies generated by the different approaches for a budget of $K=10$. We plot the distributions $\bar\Cvec(\w)$ for each sampled weight $\w\in\Omega$, together with ellipses showing two standard deviations around the distribution means $\bar\vmu$. The black line shows the linear interpolation between the means.

We observe that the proposed method places samples most homogeneously covering large parts of the expected Pareto front (dispersion $.25$) and with only marginal overlap between the distributions (mean h-test $.05$). In contrast, $\Unib$, exhibits a several gaps between the distributions (dispersion $.37$), while other distributions overlap substantially (mean h-test $.28$). The baseline $\baseline$ achieves a small overlap between solutions (mean h-test $.01$), yet shows the largest gaps between samples (dispersion $.38$). Moreover, despite having the same budget as $\AS$, $\baseline$ places fewer samples (7 compared to 9), indicating a lower efficiency in finding statistically different solutions.
{
We recall that  we can only generate system plans for sampled weights $\w$. Thus, while the interpolation between samples (grey line) is similar between $\baseline$ and $\AS$, only $\AS$ results in more nuanced system plans.}

The plot also shows the order samples are placed using $\AS$, indicated by the color gradient. Starting with the basis solutions that yield the endpoints of the Pareto-fronts, $\AS$ adds new samples placed in the largest current gaps, greedily reducing dispersion.

\paragraph{Three Objectives}
In a second example we show results for the lobby environment considering all three objectives (QoS, social cost and total distance), illustrated in Figure \ref{fig:example_results3D}.

Overall, the result is similar to the 2D case: the proposed method exhibits smaller gaps between the solutions (dispersion $.24$) while avoiding substantial overlap (mean h-test $<.01$). In contrast, $\Unib$ and $\baseline$ oversample solutions with low QoS and low total distance but high transit count, leading to high overlap (mean h-test $.14$ and $.08$, respectively), and larger gaps (dispersion $.32$ and $.37$, respectively). In contrast, $\AS$ produces more evenly spaced solutions over the entire expected Pareto-front, yielding a better approximation.

\subsection{Quantitative Analysis}
Next, we will provide a more in-depth analysis with several quantitative measures for various MRPD problem settings.
We compare our method with all three baselines under different MRPD settings with varying fleet size and task load for both environments.


\begin{figure}[t]
    \centering   
        \includegraphics[width=0.9\linewidth]{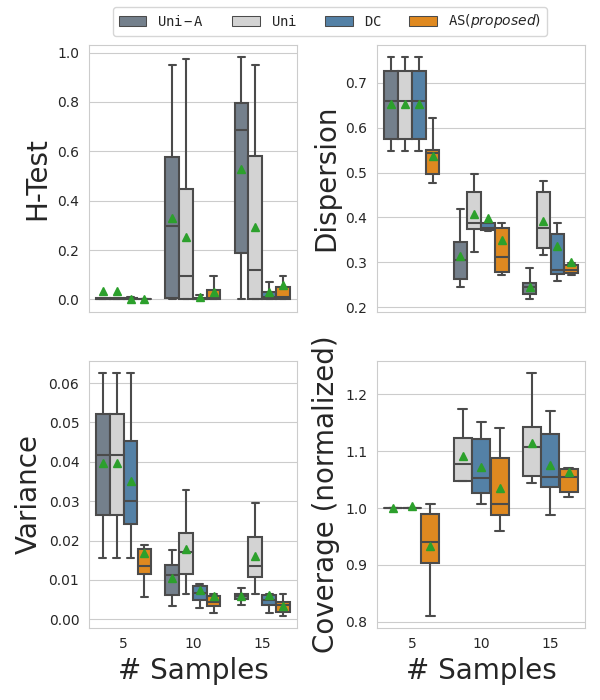}    
    \caption{Results for MRPD with two objectives. Coverage is normalized with respect to $\Uni$.}
    \label{fig:stats_2d}
\end{figure}
\paragraph{Results for two objectives}
We conduct further experiments considering two objectives, QoS and transit count. We employ fleets of $2$, $4$ and $8$ robots to service $100$ tasks in the lobby, and $200$ tasks in the office environments, yielding 6 different problem settings.
The sampling budget $K$ takes values 5, 10 and 15.

We illustrate the quantitative measures in Figure \ref{fig:stats_2d}.
We observe large differences between methods for the $\mathtt{Hypothesis\_Error}$: while all approaches achieve a small error for $K=5$, values increase drastically for both uniform approaches. In contrast, $\baseline$ and the proposed method $\AS$ keep the probability of failing an h-test under the threshold $\Delta=.1$.

For the $\mathtt{Dispersion}$ measure $\AS$ achieves the lowest values for $K=5$ and is second best after $\Uni$ for larger $K$. However, we recall that $\Uni$ always uses the full budget $K$ while $\AS$ often uses fewer samples. Indeed, $\AS$ shows clear advantages over $\Unib$ which uses the same number of samples. Further, $\AS$ also outperforms $\baseline$ with respect to dispersion, yet with a smaller margin than compared to $\Unib$.

The $\mathtt{Variance}$ shows additional insights into how the sampled solutions are spaced along the Pareto-front. Here, the proposed method shows the best performance among all approaches, \textit{i.e.,} it places samples most evenly, yet the margin to $\baseline$ is relatively small.

The $\mathtt{Coverage}$ omits the result for $\Uni$ since we normalize to its values. We observe that $\AS$ outperforms $\Unib$ and $\baseline$ for all budgets; however, the difference decreases for larger $K$.
We observe that the normalized coverage values increase with larger $K$ for $\Unib$, $\baseline$ and $\AS$. While coverage itself decreases monotonically with larger $K$, the normalized value, \textit{i.e.,} the relative value compared to $\Uni$, becomes poorer.
Lastly, we notice that for $K=5$ the coverage of $\AS$ lies below $1.0$, indicating a better value than obtained by $\Uni$

In summary, the proposed method outperforms $\Uni$ and $\baseline$ on all measures. 
Only $\baseline$ and $\AS$ are able to produce policies that are statistically significantly different. Moreover, the cost distributions of $\AS$ are spaced more evenly with smaller dispersion and yield a tighter approximation of the expected Pareto-front. 
{This indicates the effectiveness of using dispersion to guide the sample placement in $\AS$. In $\baseline$, new samples are placed without such guidance, resulting in more \emph{unsuccessful} samples, \ie samples that end up being rejected as they are too similar to existing samples.}
These results highlight that $\AS$ is able to find better sets of MO-MRPD policies for different problem setups.

\begin{figure}[t]
    \centering   
        \includegraphics[width=0.9\linewidth]{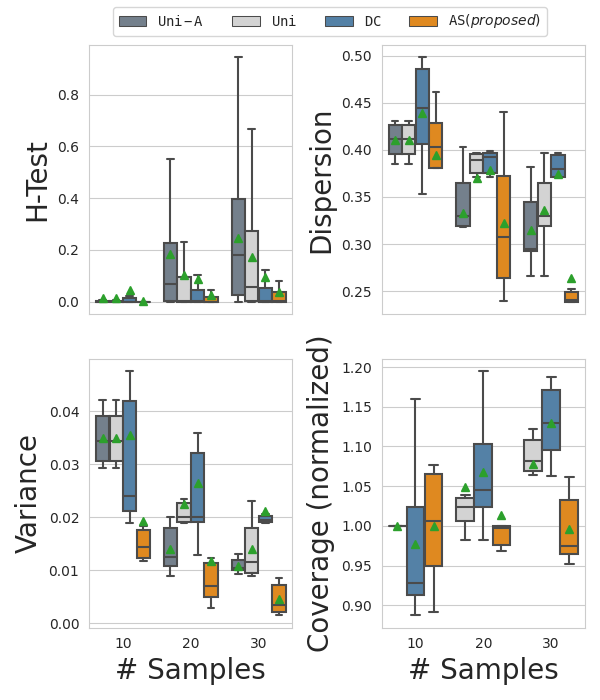}    
    \caption{Results for MRPD with three objectives. Coverage is normalized with respect to $\Uni$.}
    \label{fig:stats_3d}
\end{figure}

\paragraph{Results for three objectives}
We rerun the experiment with the total distance as a third objective and increase the sampling budget to $K\in\{10,20,30\}$.
The quantitative results are shown Figure \ref{fig:stats_3d}.

The outcome of the $\mathtt{Hypothesis\_Error}$ is mostly comparable to the experiment with two objectives. The uniform approaches still show an increase for larger $K$ - albeit smaller compared to two objectives --  such that their mean values still approach $.2$ while the upper end of the distributions exceed $.8$ and $.6$, respectively. In contrast, $\AS$ and the baseline $\baseline$ show only a small increase in error for larger $K$, with a minor advantage for $\AS$.

With respect to $\mathtt{Dispersion}$ our proposed method $\AS$ exhibits a strong advantage over all baselines as $K$ increases, even outperforming $\Uni$ which uses more samples.
In particular, $\Uni$ and $\baseline$ show only a minor to no improvement when increasing the budget from $K=20$ to $K=30$. In contrast, $\AS$ is able to further reduce dispersion significantly.
Similarly, on the $\mathtt{Variance}$ measure $\AS$ achieves the lowest value for all $K$, with a continued improvement as $K$ increases.
Lastly, the results for $\mathtt{Coverage}$ show a strong advantage of $\AS$: For all budgets $K$, the normalized variance has a mean of close to $1$, indicating that it is similar to $\Uni$. In contrast, the values for $\Unib$ and $\baseline$ increase for larger $K$.

In conclusion, the proposed method $\AS$ shows a strong performance on all measures. Indeed, the difference to the baselines is substantially larger than for the experiments with two objectives. Thus, the proposed approach is able to place samples efficiently in higher dimensions, allowing it to produce better sets of MO-MRPD policies.

\paragraph{Runtime}
{We briefly report average runtimes for the MRPD computation. Solving a single instance takes $\approx 3.9s$ for two objectives and $\approx 5.0s$ for three objectives. Thus, approximating a Pareto-front with $K=10$ samples using $\eta=20$ training instances takes approximately $780$ to $1,000s$. 
Yet, for larger instances such as city scale ride-pooling \cite{alonso2017demand}, the runtime of a single instance can be much larger, \textit{e.g.} up to one day.
The computation of the different training instances could be improved using parallelization.
}

\paragraph{Sensitivity to the number instances}

To generate statistically different policies, our method considers multiple $\eta$ randomly generated MRPD instances. 
To verify that the proposed algorithm performs well for different values of $\eta$, we investigate the sensitivity of the numerical results to the number of MRPD instances $\eta$.
Further, we validate that the number of test instances $\eta^{\mathtt{test}}$ used in the evaluation yields representative results. 
In the main experiments used $\eta =\eta^{\mathtt{test}}=20$, we now let $\eta$ and $\eta^{\mathtt{test}}$ take values in $\{10,20,30\}$.
We highlight the results for two experiment settings and run each with two and three features. The first setting is the lobby environment with $m=2$ robots and a sampling budget of $K=10$ and $K=20$ for two and three objectives, respectively. Under this setting we observed the largest deviations from the results reported in the previous sections. The second setting is the lobby environment with $m=8$ robots and sampling budget of $K=15$ and $K=30$ for two and three objectives, respectively.
Under this setting the H-test results of the proposed method ($\AS$) are among the highest values in the previous experiments (upper end of the boxplots in Figures \ref{fig:stats_2d} and \ref{fig:stats_3d}).

The results are summarized in Table \ref{tab:sensitivity}.
First, we consider changes in the number of training instances $\eta$, \textit{i.e.,} comparing columns in both tables.
We observe a difference between $\eta=10$ and $\eta=20$. Using too few training instances can lead to underestimating the variance of costs. Yet, our method still outperform $\Uni$ and $\Unib$ (see Figures \ref{fig:stats_2d}), and it can be expected that the results for $\baseline$ would be similarly affected by $\eta=10$.
Additionally, there is no significant difference between $\eta=20$ and $\eta=30$. The impact of $\eta$ is generally smaller in experiments with three objectives where all results remain under the tuning parameter $\Delta=.1$

Next, we consider the effect of varying the number of test instances $\eta^{\mathtt{test}}$ used in evaluation. Between the rows in both tables, we observe an increase in the H-test between $\eta^{\mathtt{test}}=10$ and $\eta^{\mathtt{test}}=20$. A larger number of test instances yields a larger statistical variance, which leads to higher chance of failing the H-test. Thus, a result for $\eta^{\mathtt{test}}=10$ suggests a low H-test result, while the error can be higher when considering a larger statistic. However, there is no significant increase between $\eta^{\mathtt{test}}=20$ and $\eta^{\mathtt{test}}=30$. Therefore, we conclude that previously reported results for $\eta^{\mathtt{test}}=20$ are reliable. 

{Lastly, we also consider the sensitivity towards the random sampling of tasks for a fixed $\eta$. That is, we use our main settings $\eta=20$ and $\eta^{\mathtt{test}}=20$ and repeat experiments with $10$ different random seeds. 
Overall, we found that the randomization of the seeds has only very little impact on the results: On all four measures reported in Figures \ref{fig:stats_2d} and \ref{fig:stats_3d}, the standard deviation over different seeds is $\approx.01$, and below $.001$ for the $\mathtt{Variance}$ measure. Thus, the reported results are robust to statistical differences when sampling task sequences in Algorithm \ref{alg:Pareto_Explore}.
}

Overall, the sensitivity analysis shows that the proposed algorithm performs well under different settings for $\eta$: For a lower value of $\eta=10$ $\AS$ still outperforms the baselines, while increasing $\eta$ to $30$ does not affect the results reported in the main experiments. Moreover, the evaluation results for $\eta^{\mathtt{test}}=20$  are reliable, \textit{i.e.,} a larger number of test instances does not change the outcome. {Finally, the sampling of training instances has only minimal impact on the algorithm performance.}\\

\setlength{\tabcolsep}{4.5pt}
\begin{table}[!t]

\begin{subtable}{.49\textwidth}
    \centering
    \begin{tabular}{l ccc ccc}
        \toprule
         & \multicolumn{3}{c}{2 Objectives} & \multicolumn{3}{c}{3 Objectives} 
        \\
       $\eta^{\mathtt{test}}$\hspace{-2.2cm}&
        $\eta=10$ & $\eta=20$ & $\eta=30$
       &$\eta=10$&$\eta=20$&$\eta=30$  \\
        \midrule

        $10$ &   
        .04 &.00 &.00 &
        .05  &.01 &.02  \\

        $20$ &   
        .14  &\textbf{.00 }&.00 &
        .09 &\textbf{.02 }&.04  \\

        $30$ &   
        .15 &.00&.00  &
        .08 &.03&.04 \\
        \bottomrule
    \end{tabular}
    \caption{Results for the H-test in the lobby environment with $m=2$ robots and budgets $k=10$ (two objectives) and $k=20$ (three objectives).} 
    \label{tab:lobby}
    \end{subtable}
    
    \bigskip
    \begin{subtable}{.49\textwidth}
    \centering
    \begin{tabular}{l ccc ccc}
        \toprule
         & \multicolumn{3}{c}{2 Objectives} & \multicolumn{3}{c}{3 Objectives} 
        \\
       $\eta^{\mathtt{test}}$\hspace{-.2cm}&
        $\eta=10$ & $\eta=20$ & $\eta=30$
       &$\eta=10$&$\eta=20$&$\eta=30$  \\
        \midrule

        $10$ &   
        .11 &.08 &.08 &
        .05 &.05  &.05 \\

        $20$ &   
        .17 &\textbf{.12} &.12 & 
        .04  &\textbf{.04 }&.05   \\

        $30$ &   
        .16 &.11 &.11 &
        .04 &.04  &.05  \\
        \bottomrule
    \end{tabular}
    \caption{Results for the H-test in the lobby environment with $m=8$ robots and budgets $k=15$ (two objectives) and $k=30$ (three objectives).}
    \label{tab:office}
    \end{subtable}\\
    \caption{Sensitivity of results for $\AS$ to the number of instances for different experiments. Shown are mean H-Test values when using different numbers of training instances $\eta$ and different numbers of test instances $\eta^{\mathtt{test}}$. Bold entries highlight the settings from the main experiments.}
    \label{tab:sensitivity}
    
\end{table}
\paragraph{Summary}
In conclusion, the numerical results show that the proposed algorithm $\AS$ computes sets of MRPD policies that outperform the baselines on several metrics for different MRPD settings and sampling budgets.
The H-test shows that policies found by the proposed method are statistically more distinguishable than policies found by uniform sampling.
While the baseline $\baseline$ is also able to produce statistically distinguishable policies, the dispersion, variance and coverage measures show that $\AS$ produces policies that better approximate the expected Pareto-front.
Finally, we verified that the algorithm is robust to changes in the number of training instances $\eta$, and the quantitative results are reliable.
%

\section{Discussion and Future Work}

We studied the problem of multi-objective MRPD where we want to find a set of policies that lead to different optimal trade-offs between given objectives.
A key feature of the problem is considering the statistics of the different objective values, caused by the stochastic nature of online task arrivals. Our problem formulation does not only seek to find different MRPD policies that approximate the expected Pareto-front, but also requires the policies to attain statistically different objective values.

By means of linear scalarization we converted the problem into one of finding a set of weights that balance the cost functions. We proposed an adaptive sampling method ($\AS$) and proved its completeness. 
Further, we presented how a state-of-the-art MRPD algorithm can be adapted to optimize for weighted cost functions for commonly used objective functions.
In simulation experiments, we demonstrated that $\AS$ is able to produce sets of high quality MRPD plans, outperforming several baseline approaches. Thus, the proposed framework provides system operators with a variety of options for configuring the robot behaviour to their preferences. 

{While we specifically focused on MO-MRPD, the proposed algorithm in Section \ref{sec:approach} can be applied to multi-objective formulations of a wider range of problems, such as multi-robot task assignment (MRTA) and multi-agent path finding (MAPF). Indeed, our approach does not make restrictive assumptions about the underlying MRPD solver. Further, the challenge of optimizing for competing objectives as well as stochastic problem inputs (\textit{e.g.,} requests or goals) are prevalent in many applications.}

%
One limitation of the proposed method is that it relies on linear scalarization of the multi-objective problem.
The main shortcoming of linear scalarization is that it is not \emph{Pareto-complete}: While every solution to the linear scalarization of the multi-objective problem is Pareto-optimal, there might exist Pareto-optimal solutions that are not a solution to \eqref{eq:MRPD_LSMOOP}. Thus, future work should consider other forms of scalarization to approach the MO-MRPD. One such method is using a weighted maximum instead of a weighted sum, also referred to as a Chebyshev scalarization. However, this poses major challenges for solving the MRPD problem given a choice of scalarization weights, since existing MRPD solvers are not able to optimize for such cost functions.

Future work should also consider HRI frameworks that help users to select the policy that best fits their preferences.
One approach could be choice-based learning where the user iteratively chooses between two presented options \cite{sadigh2017active, biyik2019asking, wilde2020improving, Wilde2021LearningSubmod}. Adapting this to MO-MRPD should explore how the variance of MRPD policies affect human choices, \textit{i.e.,} how humans can choose between different policies when their cost distributions are similar.
Such a framework would complement the presented theoretical work on exploring different MO-MRPD policies and thus further help system operators to adapt MRPD systems to their specific requirements.

Finally, our work focused specifically on online MRPD. However, the problem of finding trade-offs between competing objectives under stochastic demands is relevant in other variants of multi-robot task assignment (MRTA) and dynamic vehicle routing. Thus, future work could study how the proposed framework could be applied in a broader range of planning problems: {This could include considering inter-agent collisions as formalized in Multi-Agent Path Finding (MAPF), or other deployment problems} such as multi-robot informative path planning and team orienteering. 

\bibliographystyle{IEEEtran}

\begin{thebibliography}{10}
\providecommand{\url}[1]{#1}
\csname url@samestyle\endcsname
\providecommand{\newblock}{\relax}
\providecommand{\bibinfo}[2]{#2}
\providecommand{\BIBentrySTDinterwordspacing}{\spaceskip=0pt\relax}
\providecommand{\BIBentryALTinterwordstretchfactor}{4}
\providecommand{\BIBentryALTinterwordspacing}{\spaceskip=\fontdimen2\font plus
\BIBentryALTinterwordstretchfactor\fontdimen3\font minus
  \fontdimen4\font\relax}
\providecommand{\BIBforeignlanguage}[2]{{%
\expandafter\ifx\csname l@#1\endcsname\relax
\typeout{** WARNING: IEEEtran.bst: No hyphenation pattern has been}%
\typeout{** loaded for the language `#1'. Using the pattern for}%
\typeout{** the default language instead.}%
\else
\language=\csname l@#1\endcsname
\fi
#2}}
\providecommand{\BIBdecl}{\relax}
\BIBdecl

\bibitem{niechwiadowicz2008robot}
K.~Niechwiadowicz and Z.~Khan, ``Robot based logistics system for
  hospitals-survey,'' in \emph{IDT Workshop on interesting results in computer
  science and engineering}, 2008.

\bibitem{abubakar2020arna}
S.~Abubakar, S.~K. Das, C.~Robinson, M.~N. Saadatzi, M.~C. Logsdon,
  H.~Mitchell, D.~Chlebowy, and D.~O. Popa, ``Arna, a service robot for nursing
  assistance: System overview and user acceptability,'' in \emph{2020 IEEE 16th
  International Conference on Automation Science and Engineering (CASE)}.\hskip
  1em plus 0.5em minus 0.4em\relax IEEE, 2020, pp. 1408--1414.

\bibitem{cap2018multi}
M.~C{\'a}p and J.~Alonso-Mora, ``Multi-objective analysis of ridesharing in
  automated mobility-on-demand,'' in \emph{Robotics: Science and Systems
  (RSS)}, 2018.

\bibitem{niranjani2022minimization}
G.~Niranjani and K.~Umamaheswari, ``Minimization of sustainable-cost using tabu
  search for single depot heterogeneous vehicle routing problem with time
  windows,'' \emph{Wireless Personal Communications}, vol. 126, no.~2, pp.
  1481--1514, 2022.

\bibitem{wilde2020improving}
N.~Wilde, A.~Blidaru, S.~L. Smith, and D.~Kuli{\'c}, ``Improving user
  specifications for robot behavior through active preference learning:
  Framework and evaluation,'' \emph{International Journal of Robotics Research
  (IJRR)}, vol.~39, no.~6, pp. 651--667, 2020.

\bibitem{biswas2022socnavbench}
A.~Biswas, A.~Wang, G.~Silvera, A.~Steinfeld, and H.~Admoni, ``Socnavbench: A
  grounded simulation testing framework for evaluating social navigation,''
  \emph{ACM Transactions on Human-Robot Interaction (THRI)}, vol.~11, no.~3,
  pp. 1--24, 2022.

\bibitem{camisa2022multi}
A.~Camisa, A.~Testa, and G.~Notarstefano, ``Multi-robot pickup and delivery via
  distributed resource allocation,'' \emph{IEEE Transactions on Robotics},
  2022.

\bibitem{botros2022error}
A.~Botros, A.~Sadeghi, N.~Wilde, J.~Alonso-Mora, and S.~L. Smith,
  ``Error-bounded approximation of pareto fronts in robot planning problems,''
  in \emph{15th International Workshop on Algorithmic Foundations of Robotics
  (WAFR)}, 2022.

\bibitem{branke2008multiobjective}
J.~Branke, J.~Branke, K.~Deb, K.~Miettinen, and R.~Slowi{\'n}ski,
  \emph{Multiobjective optimization: Interactive and evolutionary
  approaches}.\hskip 1em plus 0.5em minus 0.4em\relax Springer Science \&
  Business Media, 2008, vol. 5252.

\bibitem{ren2022multi}
Z.~Ren, S.~Rathinam, M.~Likhachev, and H.~Choset, ``Multi-objective path-based
  d* lite,'' \emph{IEEE Robotics and Automation Letters}, vol.~7, no.~2, pp.
  3318--3325, 2022.

\bibitem{tunable_traj_planner_urban}
T.~Gu, J.~Atwood, C.~Dong, J.~M. Dolan, and J.-W. Lee, ``Tunable and stable
  real-time trajectory planning for urban autonomous driving,'' in \emph{2015
  IEEE/RSJ IROS}.\hskip 1em plus 0.5em minus 0.4em\relax IEEE, 2015, pp.
  250--256.

\bibitem{karkusdiffstack}
P.~Karkus, B.~Ivanovic, S.~Mannor, and M.~Pavone, ``Diffstack: A differentiable
  and modular control stack for autonomous vehicles,'' in \emph{Conference on
  Robot Learning}.\hskip 1em plus 0.5em minus 0.4em\relax PMLR, 2023, pp.
  2170--2180.

\bibitem{botros2022tunable}
A.~Botros and S.~L. Smith, ``Tunable trajectory planner using g 3 curves,''
  \emph{IEEE Transactions on Intelligent Vehicles}, vol.~7, no.~2, pp.
  273--285, 2022.

\bibitem{zuo2020mpc}
Z.~Zuo, X.~Yang, Z.~Li, Y.~Wang, Q.~Han, L.~Wang, and X.~Luo, ``Mpc-based
  cooperative control strategy of path planning and trajectory tracking for
  intelligent vehicles,'' \emph{IEEE Transactions on Intelligent Vehicles},
  vol.~6, no.~3, pp. 513--522, 2020.

\bibitem{ren2021multi}
Z.~Ren, S.~Rathinam, and H.~Choset, ``Multi-objective conflict-based search for
  multi-agent path finding,'' in \emph{2021 IEEE International Conference on
  Robotics and Automation (ICRA)}.\hskip 1em plus 0.5em minus 0.4em\relax IEEE,
  2021, pp. 8786--8791.

\bibitem{cai2021probabilistic}
M.~Cai, K.~Leahy, Z.~Serlin, and C.-I. Vasile, ``Probabilistic coordination of
  heterogeneous teams from capability temporal logic specifications,''
  \emph{IEEE Robotics and Automation Letters}, vol.~7, no.~2, pp. 1190--1197,
  2021.

\bibitem{ren2021multiInterval}
Z.~Ren, S.~Rathinam, M.~Likhachev, and H.~Choset, ``Multi-objective
  conflict-based search using safe-interval path planning,'' \emph{arXiv
  preprint arXiv:2108.00745}, 2021.

\bibitem{schillinger2017multi}
P.~Schillinger, M.~B{\"u}rger, and D.~V. Dimarogonas, ``Multi-objective search
  for optimal multi-robot planning with finite ltl specifications and resource
  constraints,'' in \emph{2017 IEEE International Conference on Robotics and
  Automation (ICRA)}.\hskip 1em plus 0.5em minus 0.4em\relax IEEE, 2017, pp.
  768--774.

\bibitem{Wilde2021LearningSubmod}
N.~Wilde, A.~Sadeghi, and S.~L. Smith, ``Learning submodular objectives for
  team environmental monitoring,'' \emph{IEEE Robotics and Automation Letters},
  vol.~7, no.~2, pp. 960--967, 2022.

\bibitem{zitzler1999multiobjective}
E.~Zitzler and L.~Thiele, ``Multiobjective evolutionary algorithms: a
  comparative case study and the strength pareto approach,'' \emph{IEEE
  transactions on Evolutionary Computation}, vol.~3, no.~4, pp. 257--271, 1999.

\bibitem{schutze2020pareto}
O.~Sch{\"u}tze, O.~Cuate, A.~Mart{\'\i}n, S.~Peitz, and M.~Dellnitz, ``Pareto
  explorer: a global/local exploration tool for many-objective optimization
  problems,'' \emph{Engineering Optimization}, vol.~52, no.~5, pp. 832--855,
  2020.

\bibitem{teich2001pareto}
J.~Teich, ``Pareto-front exploration with uncertain objectives,'' in
  \emph{International Conference on Evolutionary Multi-Criterion
  Optimization}.\hskip 1em plus 0.5em minus 0.4em\relax Springer, 2001, pp.
  314--328.

\bibitem{pereyra2013equispaced}
V.~Pereyra, M.~Saunders, and J.~Castillo, ``Equispaced pareto front
  construction for constrained bi-objective optimization,'' \emph{Mathematical
  and Computer Modelling}, vol.~57, no. 9-10, pp. 2122--2131, 2013.

\bibitem{saberifar2022charting}
F.~Z. Saberifar, D.~A. Shell, and J.~M. O'Kane, ``Charting the trade-off
  between design complexity and plan execution under probabilistic actions,''
  in \emph{2022 International Conference on Robotics and Automation
  (ICRA)}.\hskip 1em plus 0.5em minus 0.4em\relax IEEE, 2022, pp. 135--141.

\bibitem{lee2018sampling}
J.~Lee, D.~Yi, and S.~S. Srinivasa, ``Sampling of pareto-optimal trajectories
  using progressive objective evaluation in multi-objective motion planning,''
  in \emph{IEEE/RSJ IROS}, 2018, pp. 1--9.

\bibitem{lavalle1998optimal}
S.~M. LaValle and S.~A. Hutchinson, ``Optimal motion planning for multiple
  robots having independent goals,'' \emph{IEEE Transactions on Robotics and
  Automation}, vol.~14, no.~6, pp. 912--925, 1998.

\bibitem{parisi2017manifold}
S.~Parisi, M.~Pirotta, and J.~Peters, ``Manifold-based multi-objective policy
  search with sample reuse,'' \emph{Neurocomputing}, vol. 263, pp. 3--14, 2017.

\bibitem{zeng2021simultaneous}
Y.~Zeng, X.~Xu, S.~Jin, and R.~Zhang, ``Simultaneous navigation and radio
  mapping for cellular-connected uav with deep reinforcement learning,''
  \emph{IEEE Transactions on Wireless Communications}, vol.~20, no.~7, pp.
  4205--4220, 2021.

\bibitem{xu2021multi}
J.~Xu, A.~Spielberg, A.~Zhao, D.~Rus, and W.~Matusik, ``Multi-objective graph
  heuristic search for terrestrial robot design,'' in \emph{IEEE International
  Conference on Robotics and Automation (ICRA)}, 2021, pp. 9863--9869.

\bibitem{kent2020human}
D.~Kent and S.~Chernova, ``Human-centric active perception for autonomous
  observation,'' in \emph{2020 IEEE International Conference on Robotics and
  Automation (ICRA)}.\hskip 1em plus 0.5em minus 0.4em\relax IEEE, 2020, pp.
  1785--1791.

\bibitem{mores2023multi}
W.~Mores, P.~Nimmegeers, I.~Hashem, S.~S. Bhonsale, and J.~F. Van~Impe,
  ``Multi-objective optimization under parametric uncertainty: A pareto
  ellipsoids-based algorithm,'' \emph{Computers \& Chemical Engineering}, vol.
  169, p. 108099, 2023.

\bibitem{bhonsale2022ellipsoid}
S.~Bhonsale, W.~Mores, P.~Nimmegeers, I.~Hashem, and J.~Van~Impe, ``Ellipsoid
  based pareto filter for multiobjective optimisation under parametric
  uncertainty: A beer study,'' \emph{IFAC-PapersOnLine}, vol.~55, no.~20, pp.
  409--414, 2022.

\bibitem{sadeghi2017heterogeneous}
A.~Sadeghi and S.~L. Smith, ``Heterogeneous task allocation and sequencing via
  decentralized large neighborhood search,'' \emph{Unmanned Systems}, vol.~5,
  no.~02, pp. 79--95, 2017.

\bibitem{luo2014provably}
L.~Luo, N.~Chakraborty, and K.~Sycara, ``Provably-good distributed algorithm
  for constrained multi-robot task assignment for grouped tasks,'' \emph{IEEE
  Transactions on Robotics}, vol.~31, no.~1, pp. 19--30, 2014.

\bibitem{ray2021multi}
A.~Ray, A.~Pierson, H.~Zhu, J.~Alonso-Mora, and D.~Rus, ``Multi-robot task
  assignment for aerial tracking with viewpoint constraints,'' in \emph{2021
  IEEE/RSJ International Conference on Intelligent Robots and Systems
  (IROS)}.\hskip 1em plus 0.5em minus 0.4em\relax IEEE, 2021, pp. 1515--1522.

\bibitem{wilde2022Online}
N.~Wilde and J.~Alonso-Mora, ``Online multi-robot task assignment with
  stochastic blockages,'' in \emph{2022 IEEE 61st Conference on Decision and
  Control (CDC)}.\hskip 1em plus 0.5em minus 0.4em\relax IEEE, 2022, pp.
  5259--5266.

\bibitem{bullo2011dynamic}
F.~Bullo, E.~Frazzoli, M.~Pavone, K.~Savla, and S.~L. Smith, ``Dynamic
  {V}ehicle {R}outing for {R}obotic {S}ystems,'' \emph{Proceedings of the
  IEEE}, vol.~99, no.~9, pp. 1482--1504, 2011.

\bibitem{rios2021recent}
B.~H.~O. Rios, E.~C. Xavier, F.~K. Miyazawa, P.~Amorim, E.~Curcio, and M.~J.
  Santos, ``Recent dynamic vehicle routing problems: A survey,''
  \emph{Computers \& Industrial Engineering}, vol. 160, p. 107604, 2021.

\bibitem{psaraftis2016dynamic}
H.~N. Psaraftis, M.~Wen, and C.~A. Kontovas, ``Dynamic vehicle routing
  problems: Three decades and counting,'' \emph{Networks}, vol.~67, no.~1, pp.
  3--31, 2016.

\bibitem{botros2023optimizing}
A.~Botros, B.~Gilhuly, N.~Wilde, A.~Sadeghi, J.~Alonso~Mora, and S.~L. Smith,
  ``Optimizing task waiting times in dynamic vehicle routing,'' \emph{IEEE
  Robotics and Automation Letters}, vol.~8, no.~9, 2023.

\bibitem{bai2022group}
X.~Bai, A.~Fielbaum, M.~Kronm{\"u}ller, L.~Knoedler, and J.~Alonso-Mora,
  ``Group-based distributed auction algorithms for multi-robot task
  assignment,'' \emph{IEEE Transactions on Automation Science and Engineering},
  2022.

\bibitem{ritzinger2016survey}
U.~Ritzinger, J.~Puchinger, and R.~F. Hartl, ``A survey on dynamic and
  stochastic vehicle routing problems,'' \emph{International Journal of
  Production Research}, vol.~54, no.~1, pp. 215--231, 2016.

\bibitem{alonso2017demand}
J.~Alonso-Mora, S.~Samaranayake, A.~Wallar, E.~Frazzoli, and D.~Rus,
  ``On-demand high-capacity ride-sharing via dynamic trip-vehicle assignment,''
  \emph{Proceedings of the National Academy of Sciences}, vol. 114, no.~3, pp.
  462--467, 2017.

\bibitem{simonetto2019real}
A.~Simonetto, J.~Monteil, and C.~Gambella, ``Real-time city-scale ridesharing
  via linear assignment problems,'' \emph{Transportation Research Part C:
  Emerging Technologies}, vol. 101, pp. 208--232, 2019.

\bibitem{wei2020particle}
C.~Wei, Z.~Ji, and B.~Cai, ``Particle swarm optimization for cooperative
  multi-robot task allocation: a multi-objective approach,'' \emph{IEEE
  Robotics and Automation Letters}, vol.~5, no.~2, pp. 2530--2537, 2020.

\bibitem{tolmidis2013multi}
A.~T. Tolmidis and L.~Petrou, ``Multi-objective optimization for dynamic task
  allocation in a multi-robot system,'' \emph{Engineering Applications of
  Artificial Intelligence}, vol.~26, no. 5-6, pp. 1458--1468, 2013.

\bibitem{zhao2022graph}
A.~Zhao, J.~Xu, J.~Salazar, W.~Wang, P.~Ma, D.~Rus, and W.~Matusik, ``Graph
  grammar-based automatic design for heterogeneous fleets of underwater
  robots,'' in \emph{2022 International Conference on Robotics and Automation
  (ICRA)}.\hskip 1em plus 0.5em minus 0.4em\relax IEEE, 2022, pp. 3143--3149.

\bibitem{chandarana2018determining}
M.~Chandarana, M.~Lewis, K.~Sycara, and S.~Scherer, ``Determining effective
  swarm sizes for multi-job type missions,'' in \emph{2018 IEEE/RSJ
  International Conference on Intelligent Robots and Systems (IROS)}.\hskip 1em
  plus 0.5em minus 0.4em\relax IEEE, 2018, pp. 4848--4853.

\bibitem{chaikovskaia2021sizing}
M.~Chaikovskaia, J.-P. Gayon, Z.~E. Chebab, and J.-C. Fauroux, ``Sizing of a
  fleet of cooperative robots for the transport of homogeneous loads,'' in
  \emph{2021 IEEE 17th International Conference on Automation Science and
  Engineering (CASE)}.\hskip 1em plus 0.5em minus 0.4em\relax IEEE, 2021, pp.
  1654--1659.

\bibitem{rjeb2021sizing}
A.~Rjeb, J.-P. Gayon, and S.~Norre, ``Sizing of a heterogeneous fleet of robots
  in a logistics warehouse,'' in \emph{2021 IEEE 17th International Conference
  on Automation Science and Engineering (CASE)}.\hskip 1em plus 0.5em minus
  0.4em\relax IEEE, 2021, pp. 95--100.

\bibitem{jeon2020reward}
H.~J. Jeon, S.~Milli, and A.~D. Dragan, ``Reward-rational (implicit) choice: A
  unifying formalism for reward learning,'' in \emph{Advances in Neural
  Information Processing Systems (NIPS)}, Dec. 2020.

\bibitem{abbeel2004apprenticeship}
P.~Abbeel and A.~Y. Ng, ``Apprenticeship learning via inverse reinforcement
  learning,'' in \emph{Proceedings of the twenty-first international conference
  on Machine learning}.\hskip 1em plus 0.5em minus 0.4em\relax ACM, 2004, p.~1.

\bibitem{hadfield2016cooperative}
D.~Hadfield-Menell, S.~J. Russell, P.~Abbeel, and A.~Dragan, ``Cooperative
  inverse reinforcement learning,'' \emph{Advances in neural information
  processing systems}, vol.~29, 2016.

\bibitem{sadigh2017active}
D.~Sadigh, A.~D. Dragan, S.~S. Sastry, and S.~A. Seshia, ``Active
  preference-based learning of reward functions,'' in \emph{Proceedings of
  Robotics: Science and Systems (RSS)}, Jul. 2017.

\bibitem{biyik2019asking}
E.~Biyik, M.~Palan, N.~C. Landolfi, D.~P. Losey, and D.~Sadigh, ``Asking easy
  questions: A user-friendly approach to active reward learning,'' in
  \emph{Proceedings of the 3rd Conference on Robot Learning (CoRL)}, 2019.

\bibitem{wilde2019bayesian}
N.~Wilde, D.~Kuli{\'c}, and S.~L. Smith, ``Bayesian active learning for
  collaborative task specification using equivalence regions,'' \emph{IEEE
  Robotics and Automation Letters}, vol.~4, no.~2, pp. 1691--1698, 2019.

\bibitem{tesch2013expensive}
M.~Tesch, J.~Schneider, and H.~Choset, ``Expensive multiobjective optimization
  for robotics,'' in \emph{2013 IEEE International Conference on Robotics and
  Automation}.\hskip 1em plus 0.5em minus 0.4em\relax IEEE, 2013, pp. 973--980.

\bibitem{kim2006adaptive}
I.~Y. Kim and O.~De~Weck, ``Adaptive weighted sum method for multiobjective
  optimization: a new method for pareto front generation,'' \emph{Structural
  and multidisciplinary optimization}, vol.~31, no.~2, pp. 105--116, 2006.

\bibitem{thomas2006elements}
M.~Thomas and A.~T. Joy, \emph{Elements of information theory}.\hskip 1em plus
  0.5em minus 0.4em\relax Wiley-Interscience, 2006.

\bibitem{ehrgott2005multicriteria}
M.~Ehrgott, \emph{Multicriteria optimization}.\hskip 1em plus 0.5em minus
  0.4em\relax Springer Science \& Business Media, 2005, vol. 491.

\bibitem{smith2017glns}
S.~L. Smith and F.~Imeson, ``{GLNS}: An effective large neighborhood search
  heuristic for the generalized traveling salesman problem,'' \emph{Computers
  \& Operations Research}, vol.~87, pp. 1--19, 2017.

\end{thebibliography}

\newpage
\section{Biography Section}

\begin{IEEEbiography}[{\includegraphics[width=1in,height=1.25in,clip,keepaspectratio]{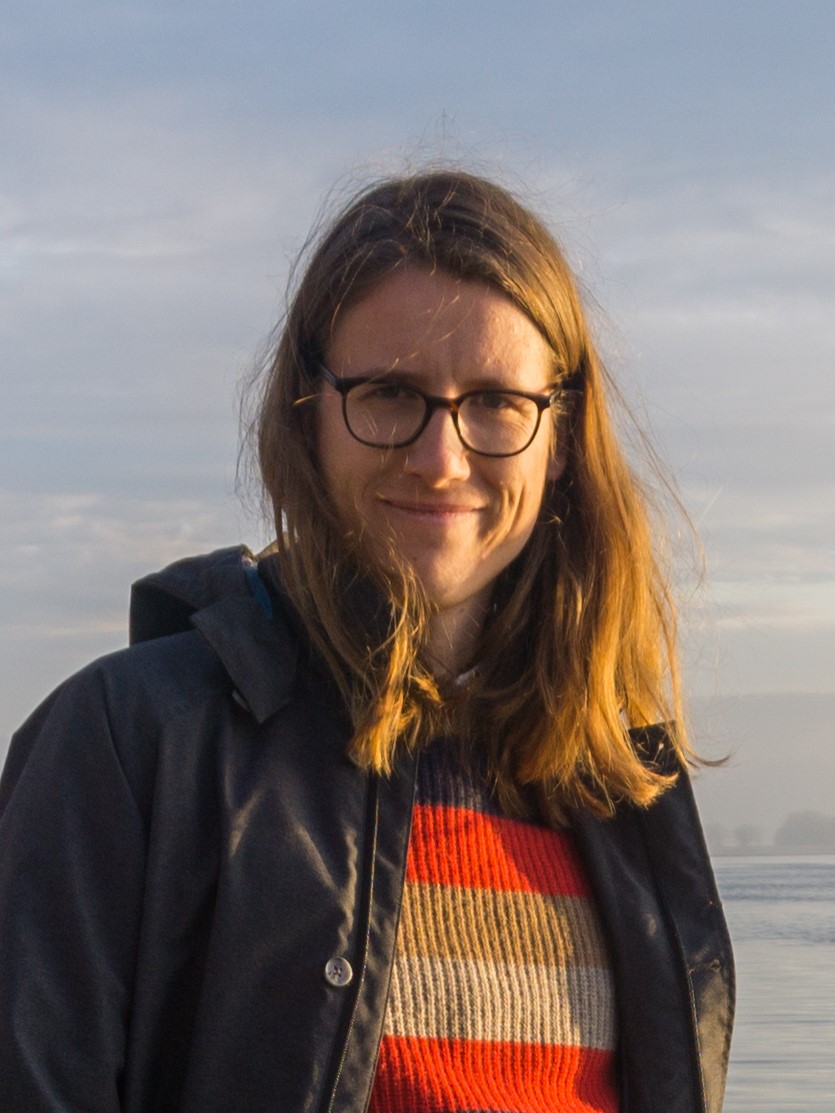}}]{Nils Wilde}
(Member, IEEE) is currently a Postdoctoral Fellow in the Autonomous Multi-Robots Lab working with Javier Alonso-Mora at TU Delft. Until August 2021 he was a postdoctoral fellow at the Autonomous Systems Lab at the University of Waterloo where he also did my PhD in Electrical and Computer Engineering (ECE) under the co-supervision of Dana Kulić and Stephen L. Smith from 2016 to 2020.

His research combines robot motion planning and human robot interaction (HRI), investigating how inexperienced users can define complex behaviours for autonomous mobile robots via active learning frameworks. Recent work broadens the focus to high level coordination of multi-robot systems under uncertainty as well as theoretical work on multi-objective optimization for robot planning problems. 
\end{IEEEbiography}
\vspace{11pt}
\begin{IEEEbiography}[{\includegraphics[width=1in,height=1.25in,clip,keepaspectratio]{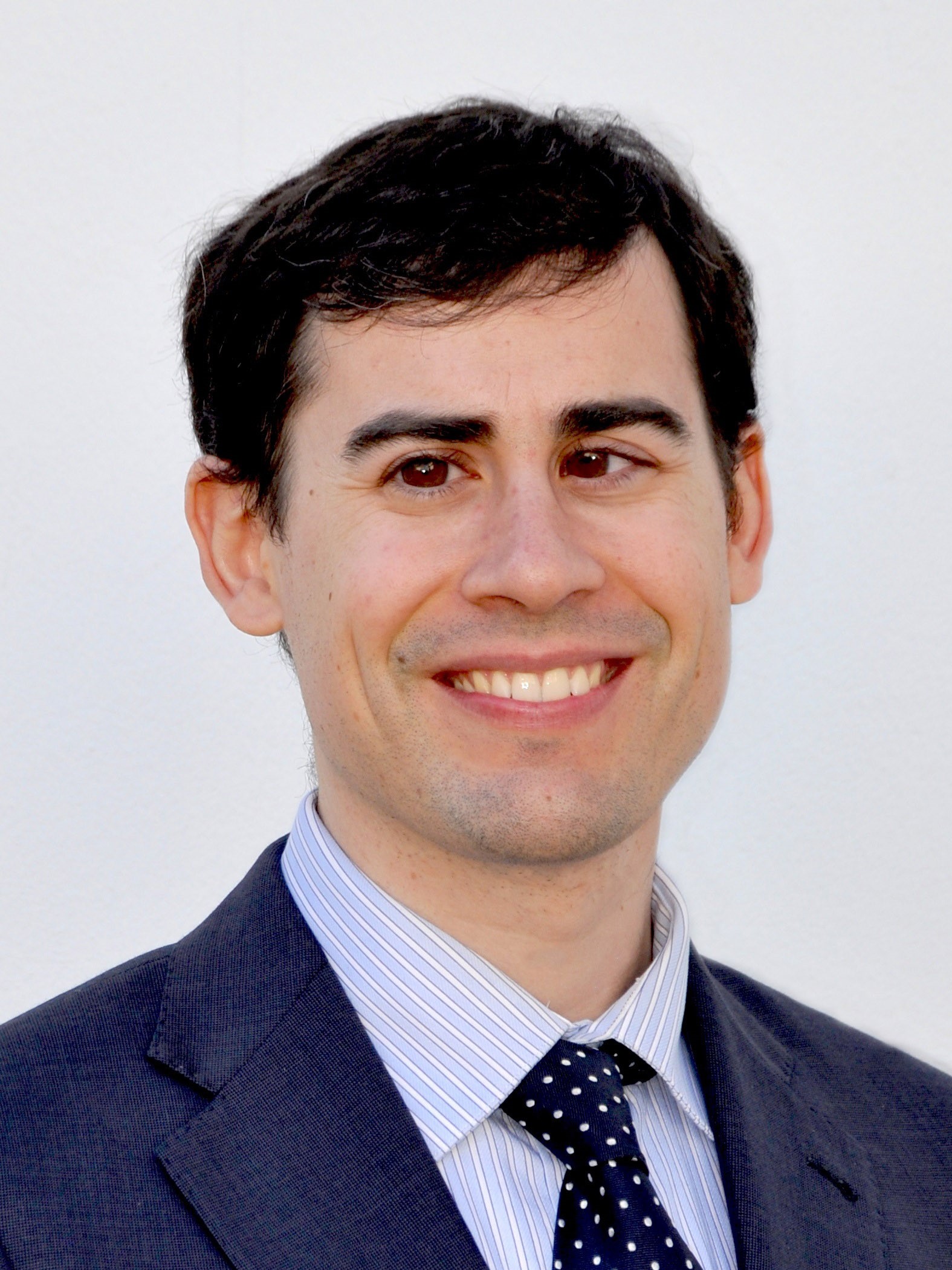}}]{Javier Alonso-Mora}
(Senior Member, IEEE)
received the M.Sc. and Ph.D. degrees in robotics
from ETH Zürich, Zürich, Switzerland, in 2010 and
2014, respectively.

He is currently an Associate Professor with
the Delft University of Technology, Delft,
The Netherlands. Until October 2016, he was
a Post-Doctoral Associate with the Computer
Science and Artificial Intelligence Laboratory,
Massachusetts Institute of Technology, Cambridge,
MA, USA. He was also a member of the Disney
Research, Zürich. His main research interests include autonomous navigation
of mobile robots, with a special emphasis in multi-robot systems and robots
that interact with other robots and humans. Toward the smart cities of the
future, he applies these techniques in various fields, including self-driving
cars, automated factories, aerial vehicles, and intelligent transportation
systems.

Dr. Alonso-Mora was a recipient of the European Research Council (ERC)
Starting Grant in 2021, the IEEE International Conference on Robotics and
Automation (ICRA) Best Paper Award on Multi-Robot Systems in 2019 and
the Veni Grant from the Netherlands Organization for Scientific Research
in 2017.

\end{IEEEbiography}

\end{document}